\documentclass{article}
\pdfoutput=1
\usepackage{times}
\usepackage{graphicx} %
\usepackage{subfigure} 
\usepackage{natbib}
\usepackage{algorithm}
\usepackage{algorithmic}
\usepackage{hyperref}

\usepackage[accepted]{icml2015} 

\usepackage{amsmath}
\usepackage{amssymb}
\usepackage{amsthm}
\usepackage{fullpage}

\usepackage{bibunits}
\defaultbibliographystyle{icml2015}

\usepackage{import}

\newtheorem{assume}{Assumption}
\newtheorem{cor}{Corollary}
\newtheorem{define}{Definition}
\newtheorem{lemma}{Lemma}

\newtheorem{prop}{Proposition}
\newtheorem{theorem}{Theorem}

\newcommand{\E}{\mathbb{E}} %
\newcommand{\bPi}{\pi_{b}} %
\newcommand{\ePi}{\pi} %
\newcommand{\uX}{\underline{X}} %
\newcommand{\uY}{\underline{Y}} %

\newcommand{\alg}{\texttt{G-SCOPE}}
\newcommand{\ks}{\texttt{KS}}
\newcommand{\flatAlg}{\texttt{Flat}}
\newcommand{\mfmc}{\texttt{MFMC}}
\newcommand{\cis}{\texttt{CIS}}

\newcommand{\sarsa}{\texttt{SARSA}}
\newcommand{\rmax}{\texttt{Rmax}}

\newcommand{\var}[1]{\underline{#1}}

\icmltitlerunning{Off-policy evaluation for MDPs with unknown structure}

\begin{document} 
\begin{bibunit}

\twocolumn[
\icmltitle{Off-policy Model-based Learning under Unknown Factored Dynamics}

\icmlauthor{Assaf Hallak}{ifogph@gmail.com}
\icmladdress{Technion,
            Haifa, Israel}
\icmlauthor{Fran\c{c}ois Schnitzler}{francois@ee.technion.ac.il}
\icmladdress{Technion,
            Haifa, Israel}
\icmlauthor{Timothy Mann}{mann@ee.technion.ac.il}
\icmladdress{Technion,
            Haifa, Israel}
\icmlauthor{Shie Mannor}{shie@ee.technion.ac.il}
\icmladdress{Technion,
            Haifa, Israel}
\icmlkeywords{Factored MDPs, Off-Policy Evaluation, Structure Learning}

\vskip 0.3in
]

\begin{abstract} 
Off-policy learning in dynamic decision problems is essential for providing strong evidence that a new policy is better than the one in use. But how can we prove superiority without testing the new policy? To answer this question, we introduce the \alg\ algorithm that evaluates a new policy based on data generated by the existing policy. Our algorithm is both computationally and sample efficient because it greedily learns to exploit factored structure in the dynamics of the environment. We present a finite sample analysis of our approach and show through experiments that the algorithm scales well on high-dimensional problems with few samples.
\end{abstract}

\section{Introduction} \label{Sec:Introduction}

Reinforcement Learning (RL) algorithms learn to maximize rewards by analyzing past experience with an unknown environment. Most RL algorithms assume that they can choose which actions to explore to learn quickly. However, this assumption leaves RL algorithms incompatible with many real-world business applications.

To understand why, consider the problem of on-line advertising: Each customer is successively presented with one of several advertisements. The advertiser's goal is to maximize the probability that a user will click on an ad. This probability is called the Click Through Rate (CTR, \citealt{richardson2007predicting}). A marketing strategy, called a policy, chooses which ads to display to each customer. However, testing new policies could lose money for the company. Therefore, management would not allow a new policy to be tested unless there is strong evidence that the policy is not worse than the company's existing policy. In other words, we would like to estimate the CTR of other strategies using only data obtained from the company's existing policy. In general, the problem of determining a policy's value from data generated by another policy is called {\em off-policy evaluation}, where the policy that generates the data is called the {\em behavior policy}, and the policy we are trying to evaluate is called the {\em target policy}. This problem may be the primary reason batch RL algorithms are hardly used in applications, despite the maturity of the field.

A simple approach to off-policy evaluation is given by the \mfmc\ algorithm \cite{Fonteneau2010}, which constructs complete trajectories for the target policy by concatenating partial trajectories generated by the behavior policy. However, this approach may require a large number of samples to construct complete trajectories. One may think that the number of samples is of little importance, since Internet technology companies have access to millions or billions of transactions. Unfortunately, the dimensionality of real-world problems is generally large (e.g., thousands or millions of dimensions) and the events they want to predict can have extremely small probability of occurring. Thus, sample efficient off-policy evaluation is paramount.

An alternative way of looking at the problem is through counterfactual (CF) analysis \cite{Bottou2013}. Given the outcome of an experiment, CF analysis is a framework for reasoning about what would have happened if some aspect of the experiment was different. In this paper, we focus on the question: what would have been the expected reward received for executing the target policy rather than the behavior policy?  One approach that falls naturally into the CF framework is Importance Sampling (IS) \cite{Bottou2013,Li2014}. IS methods evaluate the target policy by weighting rewards received by the behavior policy. The weights are determined by the probability that the target policy would perform the same action as the one prescribed by the behavior policy. Unfortunately, IS methods suffer from high variance and typically assume that the behavior policy visits every state that the target policy visits with nonzero probability. 

Even if this assumption holds, IS methods are not able to exploit structure in the environment because their estimators do not create a compact model of the environment. Exploiting this structure could drastically improve the quality of off-policy evaluation with small sample sizes (relative to the dimension of the state-space).
Indeed, there is broad empirical support that model-based methods are more sample efficient than model-free methods \cite{Hester2009,Jong2007}. However, one broad class of compact models are Factored-state Markov Decision Processes (FMDPs, \citealt{kearns1999efficient,Strehl2007,chakraborty2011structure}). An FMDP model can often be learned with a number of samples {\em logarithmic} in the total number of states, if the structure is known. Unfortunately, inferring the structure of an FMDP is generally computationally intractable for FMDPs with high-dimensional state-spaces \cite{chakraborty2011structure}, and in real-world problems the structure is rarely known in advance.

Ideally, we would like to apply model-based methods to off-policy evaluation because they are generally more sample efficient than model-free methods such as \mfmc\ and IS. In addition, we want to use algorithms that are computationally tractable. To this end, we introduce \alg, which learns the structure of an FMDP greedily. \alg\ is both sample efficient and computationally scalable. Although \alg\ does not always learn the true structure, we provide theoretical analysis relating the number of samples to the error in evaluating the target policy. Furthermore, our experimental analysis demonstrates that \alg\ is significantly more sample efficient than model-free methods.

The main contributions of this paper are:
\begin{itemize}
\item a novel, scalable method for off-policy evaluation that exploits unknown structure,
\item a finite sample analysis of this method, and
\item a demonstration through experiments that this approach is sample efficient.
\end{itemize}

The paper is organized as follows. In Section \ref{Sec:Background}, we describe the problem setting and notations. Section \ref{Sec:Algorithm} elaborates on our greedy structure learning algorithm. Our main theorem and its analysis are given in Section \ref{Sec:Analysis}. Section \ref{Sec:Experiments} presents experiments. In Section \ref{Sec:Discussion}, we discuss limitations of \alg\ and future research directions. 

\section{Background} \label{Sec:Background}
We consider dynamics that can be represented by a Markov Decision Process (MDPs; \citealt{puterman2009markov}):
\begin{define} \label{Def:MDPs}
A Markov Decision Process (MDP) is a tuple $(S, A, P(s'|s,a), R(s,a), \rho)$ where $S$ is the state space, $A$ is the action space, $P$ represents the transition probabilities from every state-action pair to another state, $R$ represents the reward function fitting each state-action pair with a random real number, and $\rho$ is a distribution over the initial state of the process. 
\end{define}
We denote by $\pi$ a Markov policy that maps states to a distribution over actions. The process horizon is $T$, and applying a policy for $T$ steps starting from $s_0 \sim \rho$ results in a cumulative reward known as the value function: $V^{\pi}(s_0) = \E \left[ \sum_{t=0}^{T-1} R(s_t, a_t) | s_0, \pi  \right]$, where the expectation is taken with respect to $P, R$ and $\pi$. We assume  $R$ is known and immediate rewards are bounded in $[0,1]$.

The system dynamics is as follows: First, an initial state $s_0$ is sampled from $\rho$. Then, for each time step $t=0,\ldots,T-1$, an action $a_t$ is sampled according to the policy $\pi(s_t)$, a reward $r_{t}$ is awarded according to $R(s_t, a_t)$ and the next state $s_{t+1}$ is sampled by $\Pr(\cdot|s_t, a_t)$. The quantity of interest is the expected policy value $\nu^{\pi} = \rho ^\top V^{\pi}$. 

\subsection{Off-Policy Evaluation}
We consider the finite horizon batch setup. Given are $H$ trajectories of length $T$ sampled from an MDP with an initial state distribution $\rho$ and behavior policy $\bPi$. 
The off-policy evaluation problem is to estimate the $T$-step value of a target policy $\ePi$ (different from $\bPi$). For the target policy $\ePi$, we aim to minimize the difference between the true and estimated policy value:
\begin{equation} \label{Eq:Evaluation Difference}
| \nu^{\ePi} -  \hat{\nu}^{\ePi} | .
\end{equation}

\subsection{Factored MDPs}

Suppose the state space can be decomposed into $D$ discrete values. We denote the $i^{\rm th}$ variable of $\uX$ by $\uX(i)$, and for a given subset of indices $\Psi \subseteq \left[ D \right] \triangleq \{1,2,..,D\}$, let $\uX(\Psi)$ be the subset of corresponding variables $\{ \uX(i) \}_{i \in \Psi}$. We define a factored MDP, similar to \citealt{guestrin2003efficient}: 

\begin{define} \label{Def:FMDPs}
A Factored MDP (FMDP) is an MDP $(S, A, P, R, \rho)$ such that the state $\uX \in S$ is composed of a set of $D$ variables $\{\uX(i)\}_{i=1} ^D$, where each variable can take values from a finite domain, such that the probability of the next state $\uY$ given that action $a$ is performed in state $\uX$ satisfies
\begin{equation} \label{Eq:Independency}
\Pr(\uY | \uX, a) = \prod_{i=1}^D \Pr(\uY(i) | \uX, a) \enspace .
\end{equation}
\end{define}

For simplicity, we assume that all variables lie in the same domain $\Gamma$, i.e., $\uX \in \Gamma^D$, where $\Gamma$ is a finite set. Furthermore, each variable in the next state $\uY(i)$ only depends on a subset of variables $\uX(\Phi_i)$ where $\Phi_i \subseteq [D]$. The indices in $\Phi_i$ are called the parents of $i$. When the size of the parent sets are smaller than $D$, then the FMDP can be represented more compactly:
\begin{equation} 
\Pr(\uY | \uX, a) = \prod_{i=1}^D \Pr(\uY(i) | \uX(\Phi_i), a) \enspace .
\end{equation}

Before delving into the algorithm and the analysis, we provide some notation. For a subset of indices $\Psi \subseteq [D]$, a realization-action pair $(v,a) \in \Gamma^{|\Psi|} \times A$ is a specific instantiation of values for the corresponding variables $\uX(\Psi), a$. We denote by $F_i = \Gamma^{ |\Phi_i| } \times A$ the set of all realization-action pairs for the parents of node $i$, and mark $\Lambda = \bigcup_{i=1}^D F_i$. 

The following quantities are used in the algorithm and consecutive analysis: Denote by $\Psi  \subseteq \left[ D \right] $ a subset of indices and by $v \in \Gamma^{|\Psi|}$ a realization of the corresponding variables:
\begin{multline} \label{Eq:Probabilities Notation}
\Pr(\uY(i) | \uX(\Psi) = v, a) \triangleq \\
 \shoveright{\frac{\sum_{t=1}^{T} \Pr(\uY(i), \uX(\Psi) = v, a, t )}{\sum_{t=1}^{T} \Pr(\uX(\Psi) = v, a, t )}} \\
\shoveleft{ \widehat{\Pr}(\uY(i) = y| \uX(\Psi) = v, a) \triangleq \frac{n(y, v, a)}{n(v, a)} \enspace ,}
\end{multline}
where the probabilities in the right term of the first equation are conditioned on the behavior policy $\bPi$ omitted for brevity. Note that if $\Psi \supseteq \Phi_i$ then $\Pr(Y(i) | X(\Psi)
= v, a)  = \Pr(Y(i) | X(\Phi_i) = v(\Phi_i), a)$, and the policy dependency cancels out.

\subsection{Previous Work}

Previous works on FMDPs focus on finding the optimal policy. Early works assumed the dependency structure is known \cite{guestrin2002algorithm,kearns1999efficient}. \citet{degris2006learning} proposed a general framework for iteratively learning the dependency structure (this work falls within this framework), yet no theoretical results were presented for their approach. SLF-Rmax \cite{Strehl2007}, Met-Rmax \cite{diuk2009adaptive} and LSE-Rmax \cite{chakraborty2011structure} are algorithms for learning the complete structure. Only the first two require as input the in-degree of the DBN structure. The sample complexity of these algorithms is {\em exponential} in the number of parents. Finally, learning the structure of DBNs with no related reward is in itself an active research topic \cite{Friedman:1998aa,Trabelsi:2013aa}.

There has also been increasing interest in the RL community regarding the topic of off-policy evaluation. Works focusing on model-based approaches mainly provide bounds on the value function estimation error. For example, the simulation lemma \cite{Kearns2002a} can be used to provide sample complexity bounds on such errors. On the other hand, model free approaches suggest estimators while trying to reduce the bias. \citet{precup2000eligibility} presents several methods based on applying importance sampling on eligibility traces, along with an empirical comparison; \citet{Theocharous2015offPolicyConfidence} had analyzed bounds on the estimation error for this method. A different approach was suggested by \citet{Fonteneau2010}: evaluate the policy by generating artificial trajectories - a concatenation of one-step transitions from observed trajectories. The main problem of these approaches besides the computational cost is that a substantial amount of data required to generate reasonable artificial trajectories.

\section{Algorithm} \label{Sec:Algorithm}
In general, inferring the structure of an FMDP is exponential in $D$ \cite{Strehl2007}. Instead, we propose a naive greedy algorithm which under some assumptions can be shown to provide small estimation error on the transition function (\alg\ - Algorithm \ref{Alg:G-SCOPE}). 

\begin{algorithm}                      %
\caption{\alg($H$ $T$-length traj., $\epsilon, \delta, C_2 =0$)}          %
\label{Alg:G-SCOPE}                           %
\begin{algorithmic}                    %
    \FOR{$i=1$ to $D$}
    	\STATE $\hat{\Phi}_i \Leftarrow \emptyset$
	    \REPEAT	    
	    \STATE $\Theta_i \Leftarrow \{ (v, v(j), a) \in \Gamma^{|\hat{\Phi}_i|+1} \times A :  j \in [D] \setminus$ \\ 			$\quad\hat{\Phi}_i, | n(v, v(j), a) | >  N(\epsilon, \delta)  \}$
	    \STATE For $N(\epsilon, \delta) =\frac{2\Gamma^2}{\epsilon^2} \ln \left( \frac{2\Gamma}{\delta_1}\right)$
	    \IF{$| \Theta | = 0$}
	    \STATE Break
	    \ENDIF 
	    \FOR{$j=1$ to $D$}
	   	\STATE $\text{diff}_j \Leftarrow \max_{ (v, v(j), a) \in \Theta} $
	    	\STATE $\;\;\; \| \widehat{\Pr}(Y(i) | X(\hat{\Phi}_i \cup j) = (v, v(j)), a) $
	    	\STATE $\;\;\; \;\;\; - \widehat{\Pr}(Y(i) | X(\hat{\Phi}_i)=v, a)  \|_1$
	    \ENDFOR
	    \STATE $j^* \Leftarrow {\arg\max}_{j \in [D]} \text{diff}_j$
	    \IF{$\text{diff}_{j^*} > C_2 + 2\epsilon$}
	    \STATE $\hat{\Phi}_i \Leftarrow \hat{\Phi}_i \cup j^*$
	    \ENDIF
	    \UNTIL{$\text{diff}_{j^*} \leq C_2 + 2\epsilon $}
    \ENDFOR    
    \STATE {\bf return} $\{  \hat{\Phi}_i \}_{i=1}^{D}$
\end{algorithmic}
\end{algorithm}

\alg\ (Greedy Structure learning of faCtored MDPs for Off-Policy Evaluation) receives off-line batch data, two confidence parameters $\epsilon$, $\delta$ and a minimum acceptable score $C_2$. The outputs $\hat{\Phi}_i$ are the estimated parents of each variable $i$. In the inner loop, the set $\Theta$ is defined as the set of all realization-action pairs which had been observed at least $N(\epsilon, \delta)$ times; These are the only pairs further considered. We then greedily add to $\hat{\Phi}_i$ the $j$'th variable which maximizes the $L_1$ difference between the old distribution depending only on $\hat{\Phi}_i$, and a distribution conditioned on the additional variable as well. Parents are no longer added when that difference is small, or when all possible realizations were not observed $N(\epsilon, \delta)$ times. The computational complexity of a naive implementation is $O(HT \Gamma D^2)$, since \alg\ sweeps the data for every input and output variable.

The main idea beyond \alg\ is that having enough samples will result in an adequate estimate of the conditional probabilities. Then, under appropriate regularity assumptions (stated in Section \ref{Sec:Analysis}), adding a non parent variable is unlikely. If parents have a higher effect than non-parents on the $L_1$ distance and non-parents have a weak effect, the $\arg\max$ procedure will most likely return only parents. When all prominent parents were found, or when there is not enough data for further inference, the algorithm stops. Once the set of assumed parents is available, we can build an estimated model and simulate {\em any} policy.

An important property of the \alg\ algorithm is that it does not necessarily find the actual parents. Instead, we settle on finding a subset of variables providing probably approximately correct transition probabilities. As a result, the number of considered parents scales with data available, a desired quality linking the model and sample complexity. Since we do not necessarily detect all parents, non-parents can have a non-zero influence on the target variable after all prominent parents have been detected. To avoid including these non-parents, the threshold to add a parent is $C_2$ plus some precision parameters. In practice, we use $C_2=0$ because including non-parents with an indirect influence on $\var{Y}(i)$ may improve the quality of the model.
However, in our analysis, we present Assumptions under which the true parents can be learned and explain $C_2$.

Finally, \alg\ can be modified to encode and construct the conditional probability distributions using decision trees. A different decision tree is constructed for each action and variable in the next state. Tree based models can produce more compact representations of the model than encoding the full conditional probability tables specified by $\hat{\Phi}_i$. While we analyze \alg\ as an algorithm that separates structure learning from estimating the conditional probability tables, for simplicity and clarity, in our experiments, we actually use a decision tree based algorithm. The modifications to the analysis for the tree based algorithm would add unnecessary complexity and distract from the key points of the analysis.

\section{Analysis} \label{Sec:Analysis}
By using a scalable but greedy approach to structure learning rather than a combinatorially exhaustive one, \alg\ can only learn arbitrarily well a subclass of models. In this section, we introduce three assumptions on the FMDP that describe this subclass, and then analyze the policy evaluation error for this subclass.

We divide $\Phi_i$ into non-overlapping ``weak'' ($\Phi_i^w$) and ``strong'' ($\Phi_i^s$) parents. These subsets will be defined formally later, but intuitively, parents in $\Phi_i^s$ have a large influence on $\var{Y}(i)$ and are easy to detect while parents in $\Phi_i^s$ have a smaller influence that may be below the empirical noise threshold and hence not be detected. Our assumptions state that (\ref{Ass:Coupling}) ``strong'' parents are sufficiently better than non-parents to be detected by \alg\ before non-parents; (\ref{Ass:dwarfNonParents}) conditionally on ``strong'' parents, non-parent have too little influence on $\var{Y}(i)$ to be accepted by \alg\ and (\ref{Ass:Submodularity}) conditioning on some ``weak'' parents does not increase the influence of other ``weak'' parents. The first two assumptions are used to bound the probability that \alg\ adds non parents in $\hat{\Phi}_i$ or does not add some strong parents, the last one to bound the error caused by the potential non-detection of weak parents.

\begin{assume} \label{Ass:Coupling}
	\textbf{Strong parent {superiority}.}
	For every $i\in [D]$, 
	there exists a ``strong'' subset of parents $\Phi^s_i \subseteq \Phi_i $
	such that 
	$\forall \Psi \subset \Phi_i$, 
	$ \Phi_i^s \backslash \Psi \neq \emptyset$,
	$j \in D \backslash \Phi_i$,
	$(v,v(j),a) \in \Gamma^{|\Psi \cup \{j\}|} \times A,$
	there exists
	$k \in \Phi^s_i \backslash \Psi$,
	such that 
	$\forall (v',v'(k),a')\in \Gamma^{|\Psi \cup \{k\}|} \times A :$
	for some $C_1 \geq 0$,
	\begin{equation}
	\begin{split}	
	& \| \Pr(\uY(i) | \uX(\Psi \cup \{ k \}) = (v', v'(k)), a')
	\\
	& \;\;\;\;\; - \Pr(\uY(i) | \uX(\Psi) = v', a')  \|_1  \geq 
	\\
	& \;\;\;\;\; \| \Pr(\uY(i) | \uX(\Psi \cup \{ j \}) = (v, v(j), a)
	\\
	& \;\;\;\;\;\;\;\; - \Pr(\uY(i) | \uX(\Psi) = v, a)  \|_1	+ C_1
	\enspace.
	\end{split}
	\end{equation}
\end{assume}

\begin{figure}[tbp]
\begin{center}
\def\svgwidth{0.8\columnwidth}
\import{figures/src/svg/}{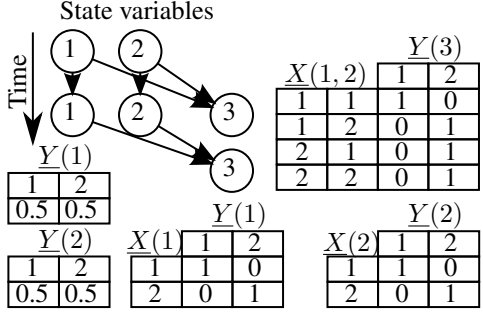}
\caption{ An FMDP that fails to satisfy Assumption \ref{Ass:Coupling}. The factorization for a given action (not shown on the figure) is represented as a dynamic Bayesian network. States not relevant for the explanation are omitted. In the conditional transition probability tables, rows correspond to possible values of parent variables and columns to possible values of the variable. Cells at the intersection contain conditional probability values.}
\label{fig:ill_Coupling}
\end{center}
\end{figure}

Assumption \ref{Ass:Coupling} ensures that, in terms of influence on the conditional distribution of the target, \alg\ finds at least one ``strong'' parent variable $k$ more attractive than any non-parent variable $j$ as long as $ \Phi_i^s \backslash \hat{\Phi}_i \neq \emptyset$.
This prevents extreme cases where due to large correlation between parents and non-parents factors, large numbers of non-parents could be added before finding the actual parents, thus considerably increasing the sample complexity. $C_1$ quantifies how much more information a true parent will provide than non-parents. The larger $C_1$ the less likely \alg\ will add a non-parent in $\hat{\Phi}_i$.

Figure \ref{fig:ill_Coupling} illustrates a subset of the state variables and corresponding conditional transition probability distributions of an FMDP that, for the action implicitly considered, does not satisfy Assumption \ref{Ass:Coupling}. In this setting, for $t \geq 3$ and considering $\Psi= \emptyset$, we have
\begin{align*}
	 \| \Pr(\var{Y}(3) )
	 -   \Pr(\var{Y}(3) | \var{X}(3) = i)  \|_1 &= 2 \;\;\; \forall i \in \{1,2\}
\\
	 \| \Pr(\var{Y}(3) )
	 -   \Pr(\var{Y}(3) | \var{X}(1) = j)  \|_1 &= 1 \;\;\; \forall j \in \{1,2\}
	 .
\end{align*}
\alg\ would add $\var{X}(3)$, a non-parent, before any true parent of $\var{Y}(3)$ in the estimated parent set. Note that in this particular case it does not matter, as $\var{X}(3)$ perfectly determines $\var{Y}(3)$. However, adding noise in the transition probabilities would make $\var{X}(3)$ less accurate than $\var{X}(1)$ and $\var{X}(2)$ together. %

\begin{assume} \label{Ass:dwarfNonParents}
       \textbf{Non-parent conditional {weakness}.}
	For every $i\in [D]$,
	$\Phi^s_i$ as in Assumption \ref{Ass:Coupling},
	$\forall \Psi: \Phi^s_i \subseteq \Psi \subseteq \Phi_i$, 
	$j \in D \backslash \Phi_i$,	
	$(v,v(j),a) \in \Gamma^{|\Psi \cup \{j\}|} \times A:$
	for some $C_2 \geq 0$,
	\begin{equation}
	\begin{split}
	&\| \Pr(\uY(i) | \uX(\Psi \cup \{ j \}) = (v, v(j), a)
	\\
	& \;\;\; - \Pr(\uY(i) | \uX(\Psi) = v, a)  \|_1 \leq C_2
	\enspace.
	\end{split}
	\end{equation}
	
\end{assume}

Assumption \ref{Ass:dwarfNonParents} ensures that, after \alg\ has detected all strong parents, non-parents have low influence on the target variable and therefore \alg\ has a low probability to add them to $\hat{\Phi}_i$. If $\Phi_i^s = \Phi_i$, then $C_2=0$.

\begin{assume} \label{Ass:Submodularity}
       \textbf{Conditional {diminishing} returns.}
	There exists 
	$C_3 \geq 0$
	 such that for every 
	 $i\in [D]$, 
	 $\Phi^s_i$ as in Assumptions \ref{Ass:Coupling} and \ref{Ass:dwarfNonParents},
	 $\Psi: \Phi^s_i \subseteq \Psi \subseteq \Phi_i$, 
	 $j, k \in \Phi_i \setminus \Psi$, 
	 $(v, v(j), v(k), a) \in \Gamma^{|\Psi|+2} \times A$,
	 if %
	\begin{equation}
	\begin{split}
	& \| \Pr(\uY(i) | \uX(\Psi \cup \{ j \}) = (v, v(j)), a) 
	\\
	& \;\;\;\;\; - \Pr(\uY(i) | \uX(\Psi) = v, a) \|_1  \geq 
	\\
	& \;\;\;\;\; \|  \Pr(\uY(i) | \uX(\Psi \cup \{ k \}) = (v, v(k)), a)
	\\
	& \;\;\;\;\;\;\;\; - \Pr(\uY(i) | \uX(\Psi) = v, a)  \|_1,
	\end{split}
	\end{equation}
	then:
	\begin{flalign}
	\begin{split}	
	& \| \Pr(\uY(i) | \uX(\Psi \cup \{ j \}) = (v, v(j)), a)
	\\
	& \;\;\;\;\; - \Pr(\uY(i) | \uX(\Psi) = v, a)  \|_1  \geq 
	\\
	& \;\;\;\;\; \| \Pr(\uY(i) | \uX(\Psi \cup \{ j, k \}) = (v, v(j), v(k), a)
	\\
	& \;\;\;\;\;\;\;\; - \Pr(\uY(i) | \uX(\Psi \cup \{ j \}) = (v, v(j)), a)  \|_1	+ C_3 \enspace .\hspace{-1cm}
	\end{split}&
	\end{flalign}
\end{assume}
If conditioning on $\uX(j)$ provides more knowledge on the output distribution than conditioning on another variable $\uX(k)$, then it will also provide more knowledge than conditioning on $\uX(k)$ given $\uX(j)$. In simple words, Assumption \ref{Ass:Submodularity} means that information inferred from variables is monotonic, so influential parents cannot go undetected. This assumption supports our greedy scheme, but there are trivial cases where it does not hold. 

Consider the substructure represented in Figure \ref{fig:ill_Submodularity}: 
\begin{align}
\begin{split}
&\;\;\;\;\;\;\;\;\;\;\;\;\underbrace{
	 	\| \Pr(\var{Y}(3) | \var{X}(1) =i)
	 	- \Pr(\var{Y}(3) ) \|_1
		}_{=0}\not \geq
	 \\
&	\underbrace{
	 	\|  \Pr(\var{Y}(3) | \var{X}(1,2) =(i,j))
	 	- \Pr(\var{Y}(3) | \var{X}(1) =i )\|_1 
		}_{=1}	
.\nonumber
\end{split}
\end{align}
Even though $\var{X}(1,2)$ are together very informative about variable $\var{Y}(3)$, any single one of them is not. In such a situation, useful variables cannot be detected by a greedy scheme. Assumption $\ref{Ass:Submodularity}$ prevents this problem.

\begin{figure}[tbp]
\begin{center}
\def\svgwidth{0.85\columnwidth}
\import{figures/src/svg/}{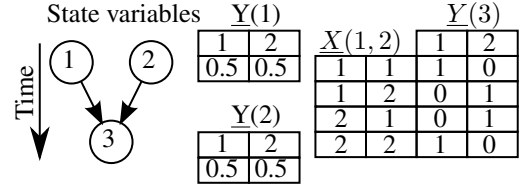}
\caption{An FMDP that does not satisfy Assumption \ref{Ass:Submodularity}. See Figure \ref{Ass:Coupling} for an explanation of the representation. }
\label{fig:ill_Submodularity}
\end{center}
\end{figure}

These assumptions form the core hardness of the structure learning problem. From one side, there may be implicit dependencies between variables induced by the dynamics - making it hard to separate non-parents. From the other side, the conditional probabilities may belong to a family of XOR like function - initially hiding attractive true parents. Finally, while these assumptions are crucial for proper analysis, non-parent variables may have a beneficial effect on the actual evaluation error as they still contain information on the true parents values, and subsequently information on the output variable. 

\begin{theorem} \label{thm:Offline Policy Evaluation}
	{	
Suppose Assumptions \ref{Ass:Coupling}, \ref{Ass:dwarfNonParents} and \ref{Ass:Submodularity} hold, and let $\frac{C_1}{4} > \epsilon + \frac{C_2}{4}, \epsilon > 0, \delta_1 > 0$, and $m = \max_{i\in[D]} |\Phi_i|$. Then there exists 
$$
H(\epsilon, \delta_1) = O \left( 
\frac{\Gamma^2}{\delta_1\epsilon^2} \ln \left( \frac{\Gamma}{\delta_1}\right) 
 \right)
$$
such that if \alg\ is given $H$ trajectories, with probability at least $1-2A D(m+2)(D+1-m)  \Gamma^{m+1} \delta_1$, \alg\ returns an evaluation of $\ePi$ satisfying:
	\begin{equation}	
	\left| \nu - \tilde{\nu} \right| \leq %
	T^2 (\delta^* + \epsilon^* D)
	\end{equation}	
	where
	\begin{flalign}
	\begin{split}
	& \epsilon^* = (4m+1) \epsilon + m C_2 + m^2 C_3 
	\\
	& \delta^* =  A \Gamma^m  \sum_{i=1}^D \psi_i \delta_1 
	\\
	& \psi_i = \max\limits_{(v,a) \in F_i} \frac{\sum_{t=1}^T \Pr(\uX_t(\Phi_i)=v, a_t=a | \ePi)}{\sum_{t=1}^T \Pr(\uX_t(\Phi_i)=v, a_t=a | \bPi)} \enspace.\hspace{-0.5cm}
	\end{split}&
	\end{flalign}
}
\end{theorem}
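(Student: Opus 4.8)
The plan is to split the evaluation error into (i) the accuracy of the learned per-factor transition model and (ii) the amplification of that model error over the horizon $T$, and then to control (i) by combining a single concentration bound with the three structural assumptions. I would work throughout on one ``good event'' on which every empirical conditional that \alg\ ever inspects is $\epsilon$-accurate in $L_1$, and reduce the horizon step to the finite-horizon simulation lemma of \citet{Kearns2002a}.

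First I would establish the concentration behind $N(\epsilon,\delta_1)$. For a fixed tuple $(i,\Psi,v,a)$ observed at least $N(\epsilon,\delta_1)$ times, a coordinate-wise Hoeffding bound over the $\Gamma$ possible values of $\uY(i)$, together with a union bound over those coordinates, gives $\|\widehat{\Pr}(\uY(i)\mid\uX(\Psi)=v,a)-\Pr(\uY(i)\mid\uX(\Psi)=v,a)\|_1\le\epsilon$ with probability at least $1-\delta_1$. Union-bounding over every tuple the greedy search can query — at most $A\,D\,(m+2)(D+1-m)\,\Gamma^{m+1}$ of them, with a factor $2$ for the two-sided bound — yields the stated confidence $1-2AD(m+2)(D+1-m)\Gamma^{m+1}\delta_1$. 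I would also fix $H(\epsilon,\delta_1)$ so that, with the same confidence, every realization-action pair that $\ePi$ reaches with non-negligible probability is in fact observed at least $N(\epsilon,\delta_1)$ times under $\bPi$; the importance ratios $\psi_i$ enter here because a pair rare under $\bPi$ but frequent under $\ePi$ must still be sampled enough, and the $1/\delta_1$ inflation from $N$ to $H$ comes from a Markov-type bound on the visit counts.

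On the good event I would prove the structure-learning invariant $\Phi_i^s\subseteq\hat{\Phi}_i\subseteq\Phi_i$ by induction on the greedy iterations. The key estimate is that every empirical $\mathrm{diff}$ lies within $2\epsilon$ of its population value, since it is formed from two empirical conditionals each $\epsilon$-close to truth. Hence, whenever a strong parent is still missing, Assumption~\ref{Ass:Coupling} forces the best strong parent $k$ to beat every non-parent $j$ by an empirical margin at least $C_1-4\epsilon$ (positive because $C_1>4\epsilon+C_2$) and simultaneously to clear the acceptance threshold, using $\mathrm{diff}_k\ge C_1-2\epsilon>C_2+2\epsilon$, i.e.\ exactly $\tfrac{C_1}{4}>\epsilon+\tfrac{C_2}{4}$; once all strong parents are in, Assumption~\ref{Ass:dwarfNonParents} caps every non-parent's empirical $\mathrm{diff}$ at $C_2+2\epsilon$, so none is ever accepted. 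Thus \alg\ adds only genuine parents and never stops before capturing $\Phi_i^s$.

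The remaining and, I expect, hardest step is bounding the per-factor error $\|\Pr(\uY(i)\mid\uX(\Phi_i),a)-\widehat{\Pr}(\uY(i)\mid\uX(\hat{\Phi}_i),a)\|_1$, since the undetected weak parents $\Phi_i\setminus\hat{\Phi}_i$ genuinely perturb the conditional. I would split it into an estimation part ($\le\epsilon$) and a missing-parent part, and telescope the latter by re-introducing the weak parents one at a time: at the stopping point each marginal increment relative to $\hat{\Phi}_i$ is at most $C_2+4\epsilon$ at the population scale, and Assumption~\ref{Ass:Submodularity} is what lets me relate an increment taken against a larger intermediate conditioning set back to the increment against $\hat{\Phi}_i$, with slack $C_3$ per comparison; carrying at most $m$ increments and $m$ comparisons accumulates to the $mC_2+m^2C_3$ contribution and, with the estimation terms, gives $\epsilon^*=(4m+1)\epsilon+mC_2+m^2C_3$. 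I would then lift these to a full-transition bound via $\|\prod_i p_i-\prod_i q_i\|_1\le\sum_i\|p_i-q_i\|_1$, obtaining per-step model error $\le D\epsilon^*$ on well-sampled pairs, and feed it into the simulation lemma, whose $T$-step accumulation contributes the $T^2$ prefactor. Because $\nu^{\ePi}$ is an expectation under $\ePi$ while the estimation guarantees are anchored to pairs visited by $\bPi$, the change of measure attaches the factors $\psi_i$ and isolates the under-sampled failure mass into $\delta^*=A\Gamma^m\sum_i\psi_i\delta_1$, yielding $|\nu-\tilde{\nu}|\le T^2(\delta^*+\epsilon^* D)$. The main obstacle throughout is this weak-parent telescoping together with the change of measure: getting Assumption~\ref{Ass:Submodularity} to control increments against ever-larger conditioning sets, and routing under-sampled pairs cleanly into $\delta^*$ rather than letting them corrupt $\epsilon^*$.
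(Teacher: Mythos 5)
Your overall architecture is essentially the paper's: a finite-horizon simulation lemma contributing the $T^2$ factor, a per-pair concentration bound defining $N(\epsilon,\delta_1)$, a coverage argument converting $N$ into a trajectory count $H$ with the importance ratios $\psi_i$ absorbing the under-sampled mass into $\delta^*$, a two-case induction showing \alg\ adds only true parents and all strong parents (with the same margins $C_1-4\epsilon>0$ and $C_1-2\epsilon>C_2+2\epsilon$), a telescoping over undetected weak parents controlled by Assumption~\ref{Ass:Submodularity} yielding $(4m+1)\epsilon+mC_2+m^2C_3$, and the product-to-sum inequality for factored distributions lifting per-factor error to $D\epsilon^*$. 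All of this matches the paper's Lemmas and their combination.

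The one genuine gap is your concentration step. You invoke a coordinate-wise Hoeffding bound over the $\Gamma$ outcomes of $\uY(i)$, but Hoeffding requires the contributing samples to be independent, and the observations of a given realization-action pair $(v,a)$ within a single trajectory are not: whether $(v,a)$ is visited at time $t$, and the outcome $\uY(i)$ observed there, depend on the outcomes at earlier timesteps, so the conditional samples entering $n(y,v,a)/n(v,a)$ form a dependent sequence. The paper addresses exactly this by constructing a Doob martingale from the empirical frequencies and applying Azuma's inequality, which yields the same exponent and hence the same $N(\epsilon,\delta_1)=\frac{2\Gamma^2}{\epsilon^2}\ln\left(\frac{2\Gamma}{\delta_1}\right)$; so your constants survive, but the justification as written would not. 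Relatedly, your trajectory-count step gestures at a ``Markov-type bound on visit counts,'' whereas the paper needs a sharper sequential counting argument (Lemma 56 of Li, extended to categorical outcomes) to show that after $H=O\bigl(\frac{\Gamma^2}{\delta_1\epsilon^2}\ln\frac{\Gamma}{\delta_1}\bigr)$ trajectories the set of pairs seen fewer than $N$ times has small mass under $\bPi$, before reweighting by $\psi_i$; and your union-bound accounting for the confidence $1-2AD(m+2)(D+1-m)\Gamma^{m+1}\delta_1$ glosses over the fact that the $(m+2)$ versus $(m+1)$ factor arises from folding the trajectory-coverage failure event into the structure-learning failure events, not from counting queried tuples alone. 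These last two points are bookkeeping; the independence issue is the substantive one to repair.
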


The proof of Theorem \ref{thm:Offline Policy Evaluation} is divided in 4 parts, detailed in the supplementary material. First, we derive a simulation lemma for MDPs stating that for the target policy two MDPs with similar transition probability distributions have proximate value functions. 
We then consider the \emph{number of samples} needed to estimate the transition probabilities of various realization-action pairs. Samples within a trajectory may not be independent so we derive a bound based on Azuma's inequality for martingales.
Subsequently, we consider the \emph{number of trajectories} needed to derive a model that evaluates the target policy accurately. 
If the behavior policy visits enough the parent realizations that the target policy is likely to visit, then the number of trajectories can be small. On the other hand, if the behavior never visits parent realizations that the target policy visits, then the number of trajectories may be infinite. This is captured by $\psi_i$.
Finally, we bound the error due to greedy parent selection under Assumptions \ref{Ass:Coupling}, \ref{Ass:dwarfNonParents} and \ref{Ass:Submodularity}.

The evaluation error bound depends on the horizon $T$, on the number of variables $D$, on the error bound $\epsilon^*$ on most transition probability values of the FMDP constructed by \alg\, and on the probability $T\delta^*$ that a trajectory will not visit a state with badly estimated probability values. The dependency of $\epsilon^*$ on $m$ is the first advantage of the factorization. The constants $C_1$, $C_2$ and $C_3$, from Assumptions \ref{Ass:Coupling}, \ref{Ass:dwarfNonParents} and \ref{Ass:Submodularity}, respectively, indicate the effect of the model ``hardness'' on the bound. When $C_1$ is large enough and $C_2 = C_3 = 0$, the true structure can be learned greedily and the error can be driven arbitrarily close to $0$. In other cases, \alg\ may learn the wrong structure resulting in some approximation error. 

Next, observe the probability that the bounds in Theorem \ref{thm:Offline Policy Evaluation} hold. The multiplicative term $A \Gamma^m$ is unavoidable since for each parents realization and action pair the estimation error on the transition probability must be bounded. The main advantage of this theorem is the lack of a $\Gamma^D$ multiplicative term, which means the effective state space decreased exponentially. The factor $m+2$  is due to the number of iterations of \alg\ where a parent is added, and $D-m+1$ is due to bounds on non-parents that must be valid for all these iterations.

In $\delta^*$, the $\psi_i$ values characterize the mismatch between the behavior policy and the target policy. If the behavior policy visits all of the parent-action realizations that the target policy visits with sufficiently high probability, then the $\psi_i$ parameters will be small. But if the target policy visits parent-action realizations that are never visited by the behavior policy, then the $\psi_i$ values may be infinite. The $\psi_i$ values are similar to importance sampling weights used by some model-free off-policy algorithms. However, unlike model-free approaches that depend on the differences in the state visitation distributions of the behavior policy and the target policy, the $\psi_i$ values depend on the differences in the parent realization visitation distributions between the behavior policy and the target policy. This is more flexible because the $\psi_i$ values can be small even when the behavior policy and the target policy visit different regions of the state-space.

\section{Experiments} \label{Sec:Experiments}

We compared \alg\ to other off-policy evaluation algorithms in the Taxi domain \cite{Dietterich1998}, randomly generated FMDPs, and the Space Invaders domain \cite{Bellemare2013}. Since the domains compared in our experiments have different reward scales, we normalized the errors to compare $\frac{| \nu - \tilde{\nu} |}{| \nu |}$. In all experiments, the behavior policy differs from the target policy. Furthermore, evaluation error always refers to the target policy's evaluation error, and all trajectory data is generated by the behavior policy.

We compare \alg\ to the following algorithms:
\begin{itemize}
\item Model-Free Monte-Carlo (\mfmc, \citealt{Fonteneau2010}): a model-free off-policy evaluation algorithm that constructs artificial target policies by concatenating partial trajectories generated by the behavior policy,
\item Clipped Importance Sampling (\cis, \citealt{Bottou2013}): a model-free importance sampling algorithm that uses a heuristic approach to clip extremely large importance sampling ratios,
\item \flatAlg\ : a flat model-based approach that assumes no structure between any two state-action pairs and simply builds an empirical next state distribution for each state-action pair, and
\item Known Structure (\ks) : a model-based method that is given the true parents, but still needs to estimate the conditional probability tables from data generated by the behavior policy. \ks\ should outperform \alg, because \ks\ knows the structure. We introduce \ks\ to differentiate the evaluation error due to insufficient samples from the evaluation error due to \alg\ selecting the wrong parent variables.
\end{itemize}

Our experimental results show that (1) model-based off-policy evaluation algorithms are more sample efficient than model-free methods, (2) exploiting structure can dramatically improve sample efficiency, and (3) \alg\ often provides a good evaluation of the target policy despite its greedy structure learning approach.

\subsection{Taxi Domain}

The objective in the Taxi domain \cite{Dietterich1998} is for the agent to pickup a passenger from one location and to drop the passenger off at a destination. The state can be described by four variables. We selected the initial state according to a uniform random distribution and used a horizon $T = 200$. The behavior policy selected actions uniform randomly, while the target policy was derived by solving the Taxi domain with the \rmax\ algorithm \cite{Brafman2002}. We discovered that the deterministic policy returned by \rmax\ was problematic for \cis, because the probability of almost all trajectories generated by the behavior policy were 0 with respect to the target policy. To resolve this problem, we modified the policy returned by \rmax\ to ensure that every action is selected in every state with probability at least $\varepsilon = 0.05$.

The Taxi domain is a useful benchmark because we know the true structure and the total number of states is only 500. Thus, we can compare \alg\ to \ks\ and \flatAlg.

\begin{figure}
\centering
\includegraphics[width=0.45\textwidth]{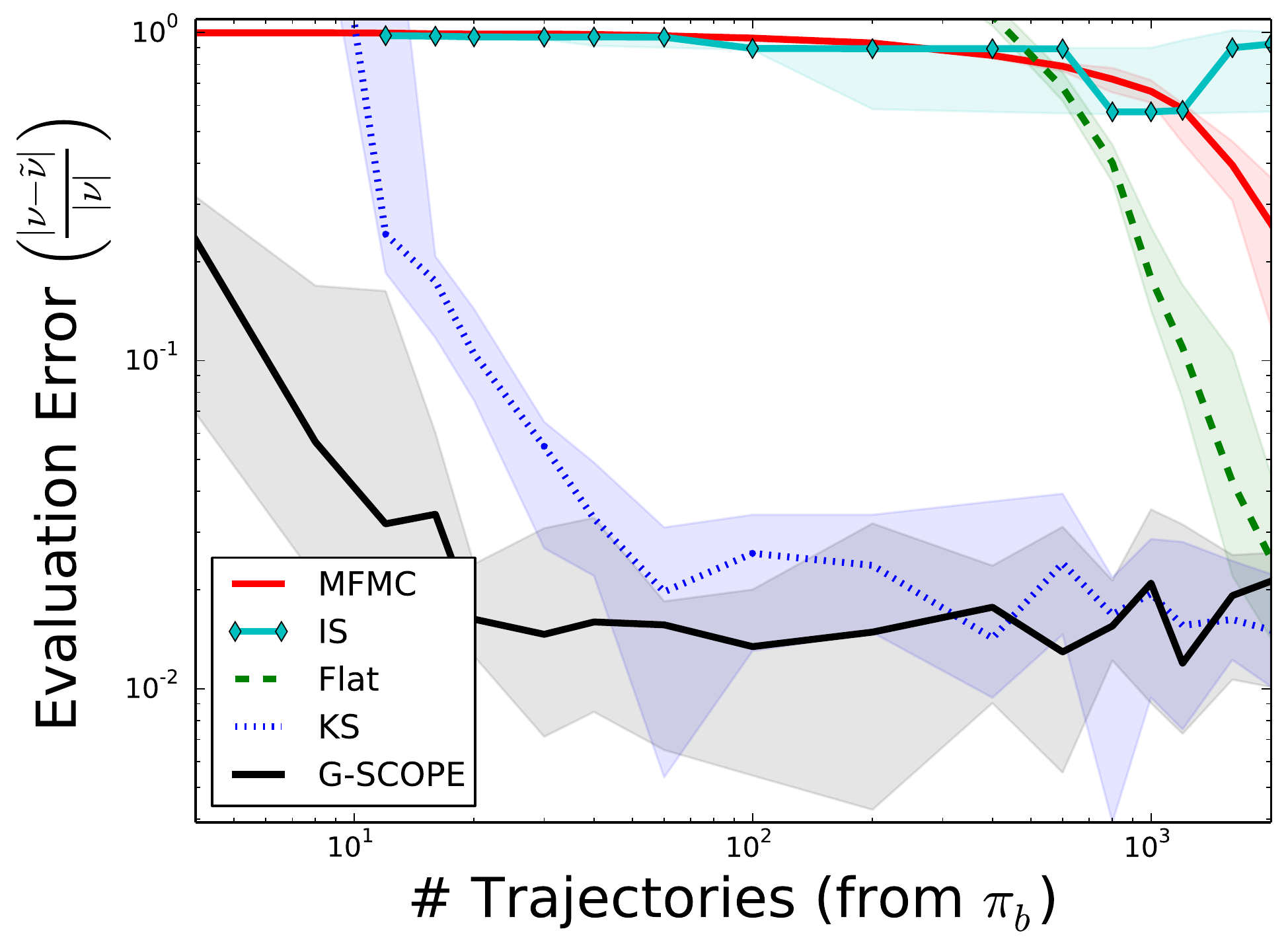}
\caption{Taxi domain: Median evaluation error for the target policy (shaded region: $1^{\rm st} - 3^{\rm rd}$ quantiles) on log-scale for \mfmc, \cis, \flatAlg, \ks, and \alg\ varying the number of trajectories generated by the behavior policy. Without exploiting structure \mfmc\ and \flatAlg\ require many trajectories to achieve small evaluation error. Yet, \ks\ and \alg\ achieve small evaluation error with just a few trajectories. Because \alg\ adapts the complexity of the model to the samples available, it achieve smaller estimation error than even \ks\ for extremely few trajectories. }
\label{fig:taxi_err}
\end{figure}

Figure \ref{fig:taxi_err} presents the normalized evaluation error (on a log-scale) for \mfmc, \cis, \flatAlg, \ks, and \alg\ over 2,000 trajectories generated by the behavior policy. Median and quantiles are estimated over 40 independent trials. For intermediate and large number of trajectories, \alg\ performs about the same as if the structure is given and achieves smaller error than the model-free algorithms (\mfmc\ and \cis). Notice that \mfmc, \cis, and \flatAlg, which do not take advantage of the domains structure, require a large number of trajectories before they achieve low evaluation error. Interestingly, the \flatAlg\ (model-based) approach appears to be more sample efficient than \mfmc, which is in line with observations that model-based RL is more efficient than model-free RL \cite{Hester2009,Jong2007}. \ks\ and \alg, on the other hand, achieve low evaluation error after just a few trajectories and have similar performance, except for very few trajectories where \alg\ can adapt the model complexity to the number of samples and therefore achieves a lower evaluation error than the algorithm knowing the structure. This provides one example where greedy structure learning is effective.

\subsection{Randomly Generated Factored Domains}

To test \alg\ in a higher dimensional problem, where we still know the true structure, we randomly generated FMDPs with $D = 20$ dimensional states. The domain of each variable was $\Gamma = \{ 1, 2 \}$. For each state variable the number of parents was uniformly selected from 1 to 4 and the parents were also chosen randomly. Afterwards, the conditional probability tables were filled in uniformly and normalized to ensure they specified proper probability distributions. The FMDP was given a sparse reward function that returned $1$ if and only if the last bit in the state-vector was $1$ and returned $0$ otherwise. We used a horizon $T = 200$. The behavior policy selected actions uniform randomly, while the target policy was derived by running \sarsa \cite{Sutton1998} with linear value function approximation on the FMDP for 5,000 episodes with a learning rate $0.1$, discount factor $0.9$, and epsilon-greedy parameter $0.05$. After training \sarsa, we extracted a stationary target policy. As in the Taxi domain, we modified the policy returned by \sarsa\ to ensure that every action could be selected in every state with probability at least $\varepsilon = 0.05$.

For the randomly generated FMDPs, we could not construct a flat model because there are $2^{20} = 1,048,576$ states and the number of parameters in a flat model scales quadratically with the size of the state-space. However, we could still compare \mfmc, \cis, \ks, and \alg.

\begin{figure}
\centering
\includegraphics[width=0.5\textwidth]{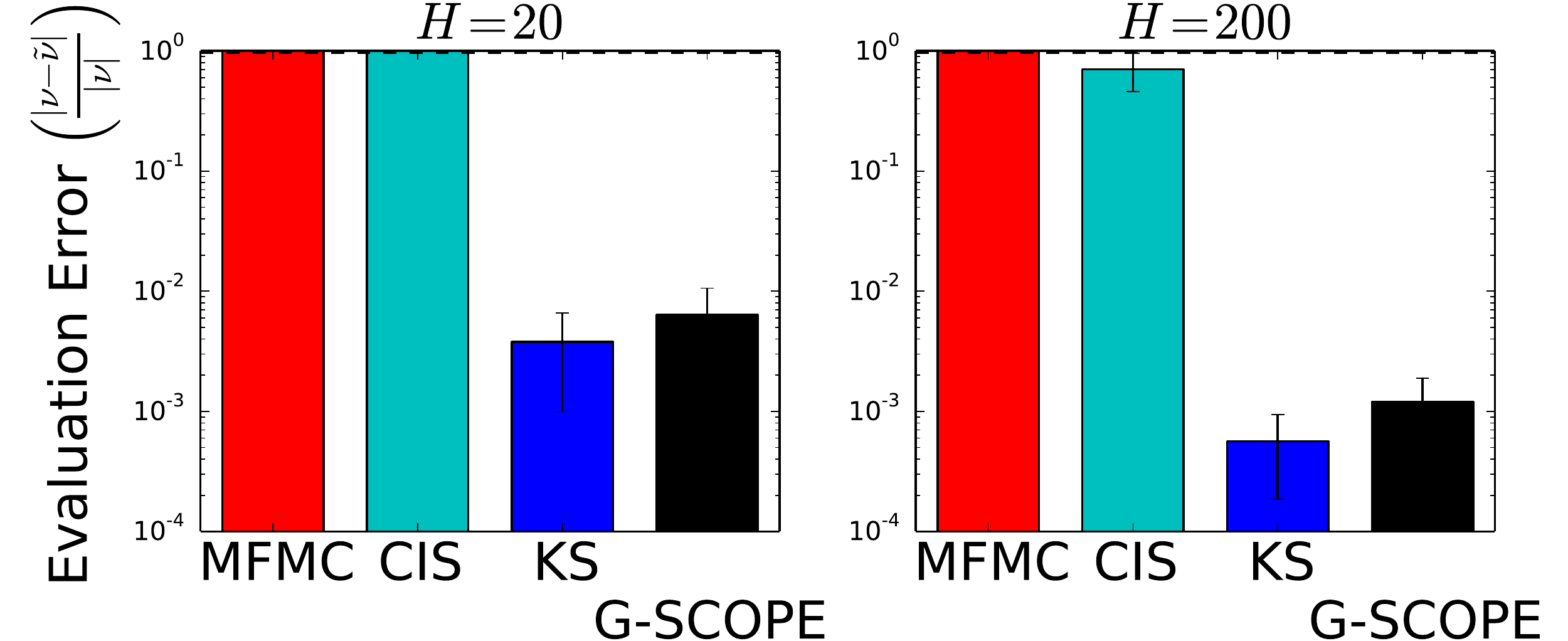}
\caption{Random FMDP domain: Average evaluation error ($\pm 1$ std.\ deviation) on log-scale for \mfmc, \ks, and \alg\ (with $H = 20$ and $200$ trajectories). \alg\ has slightly worse performance than Known Structure, but \alg\ achieves significantly lower evaluation error than \mfmc.}
\label{fig:rfmdp_err}
\end{figure}

Figure \ref{fig:rfmdp_err} presents the normalized evaluation error (on a log-scale) for \mfmc, \cis, \ks, and \alg\ given $H = 20$ and $H = 200$ trajectories from the behavior policy. Average and standard deviations are estimated over 10 independent trials. \mfmc\ fails because in this high-dimensional task there is not enough data to construct artificial trajectories for the target policy. \cis\ fairs only slightly better than \mfmc, because it uses all of the trajectory data. Unfortunately, most of the trajectories generated by the behavior policy are not probable under the target policy and its evaluation of the target policy is pessimistic. \alg\ has slightly worse performance than \ks, but \alg\ achieves significantly lower evaluation error than \mfmc\ and \cis.

\subsection{Space Invaders}

In the Space Invaders (SI) domain using the Arcade Learning Environment \cite{Bellemare2013}, not only do we not know the parent structure, we also cannot verify that the factored dynamics assumption even holds \eqref{Eq:Independency}. Thus, SI presents a challenging benchmark for off-policy evaluation. We used the $1024$-bit RAM as the state vector. We set the horizon $T = 1000$ so that the behavior policy would experience a diverse set of states.

As in the previous experiment, the behavior policy selected actions uniformly at random, while the target policy was derived by running SARSA \cite{Sutton1998} with linear value function approximation on the FMDP with a learning rate $0.1$, discount factor $0.9$, and epsilon-greedy parameter $0.05$. We only trained SARSA for 500 episodes, because of the time required to sample an episode. After training, we extracted a stationary target policy, which ensured all actions could be selected in all states with probability at least $\varepsilon = 0.05$.

\begin{figure}
\centering
\includegraphics[width=0.5\textwidth]{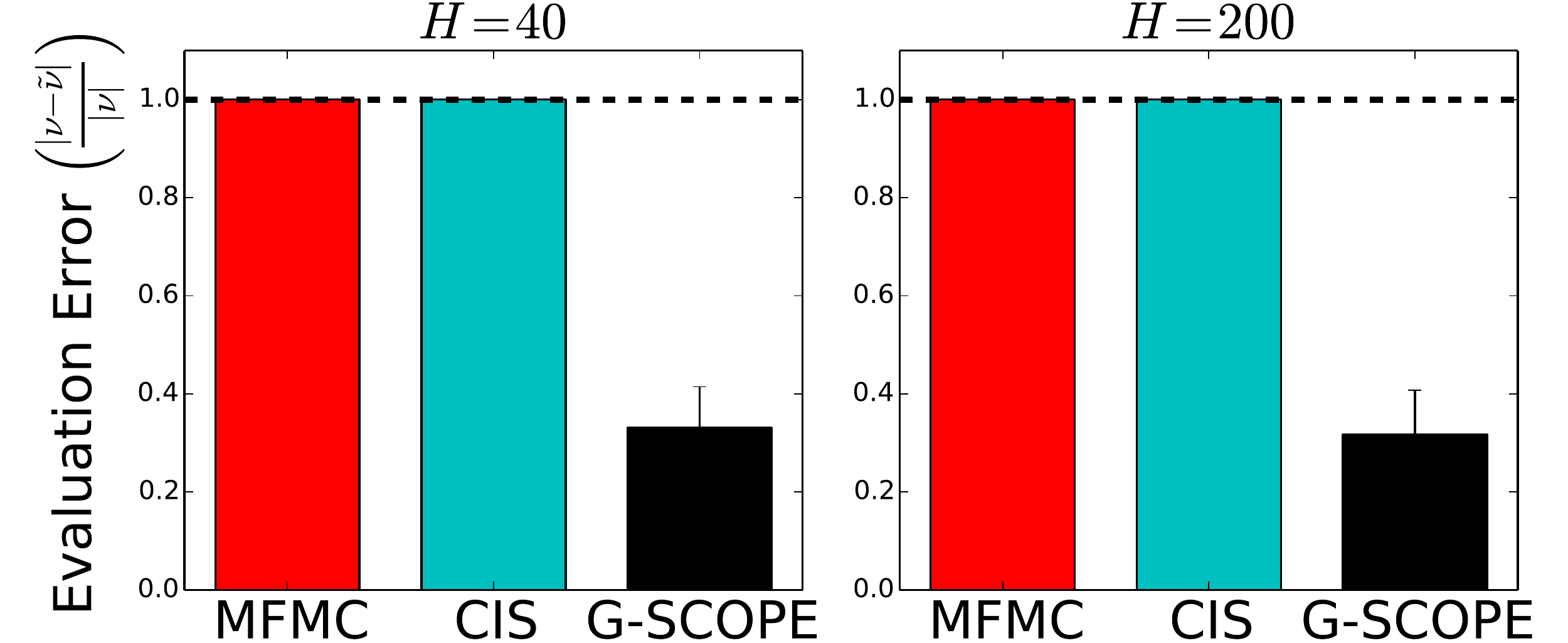}
\caption{Space Invaders domain: Average evaluation error ($\pm 1$ std.\ deviation) for \mfmc, \cis, and \alg\ (with $H = 40$ and $200$ trajectories). \alg\ achieves significantly lower evaluation error than \mfmc\ and \cis.}
\label{fig:si_err}
\end{figure}

Figure \ref{fig:si_err} shows the normalized evaluation error for \mfmc, \cis, and \alg\ given $H = 40$ and $H = 200$ trajectories from the behavior policy. Averages and standard deviations are estimated over 5 independent trials. Again, the evaluation error of \alg\ is much smaller than \mfmc\ and \cis. In fact, \mfmc\ and \cis\ perform no better than a strategy that always predicts the target policy's value is $0$. The poor performance of \mfmc\ is due to the impossibility to construct artificial trajectories from samples in such a high dimensional space.

\section{Discussion} \label{Sec:Discussion}

We presented a finite sample analysis of \alg\ that shows how samples can be related to the evaluation error. When $m \ll D$, the sample complexity scales logarithmically with number of states, where $m = \arg \max_{i \in [D]} |\Phi_i|$.

Our experiments show that (1) model-based off-policy evaluation algorithms are more sample efficient than model-free methods, (2) exploiting structure can dramatically improve sample efficiency, and (3) \alg\ often provides a good evaluation of the target policy despite using a greedy structure learning approach. Thus, \alg\ provides a practical solution for evaluating new policies. Our empirical evaluation on large and small FMDPs shows our approach outperforms existing methods, which only exploit trajectories.

We analyzed \alg\ under three assumptions restricting the class of FMDPs that can be considered. These three assumptions imply that (1) including weak parent will not make any other weak parent (significantly) more informative than it was before, (2) strong parents are more relevant than non-parents, and (3) conditioned on the strong parents non-parents are non-informative. We believe that many real-world problems approximately satisfy these assumptions. If the problem under consideration does not satisfy them, then learning algorithms of combinatorial computational complexity in the number of state variables must be considered to correctly identify the true parents \cite{chakraborty2011structure}.

To the best of our knowledge, this is the first model-based algorithm and analysis for off-policy evaluation in FMDPs. Moreover, \alg\ is a tractable algorithm for learning the structure of an FMDP even if no prior knowledge is given about the order in which variables should be considered. That being said, we hope that showing the effectiveness of structure learning for off-policy evaluation will encourage the adaptation of existing algorithms for learning the structure of FMDPs and more generally dynamic Bayesian networks for off-policy evaluation.

\putbib[OfflineDBNEval]
\end{bibunit}

\begin{bibunit}
\newpage
\appendix
\onecolumn

\section{List of Notations} \label{app:notations}
\begin{tabular}{|l|l|}
	\hline
	Notation & Meaning \\ \hline
	$A$ & Action space \\ \hline	
	$T$ & Time horizon \\ \hline
	$t$ & Time index $t=0..T$ \\ \hline
	$H$ & Number of trajectories in batch data \\ \hline
	$D$ & Number of factors in each state \\ \hline
	$[D]$ & The set $1,2,..,D$. \\ \hline
	$\Gamma$ & Domain of each factor in a state and (dual notation) the number of possible values for the factor \\ \hline
	$M$ & Markov Decision Process \\ \hline
	$\rho$ & Distribution of first state in MDP \\ \hline
	$\uX$ & Input variable (represents previous state) \\ \hline
	$\uY$ & Output variable (represents next state) \\ \hline
	$\uY(i)$ & The $i$'th variable in the output. \\ \hline
	$\Psi$ & A subset of indices \\ \hline
	$\uX(\Psi)$ & The corresponding subset of variables to $\Psi$ \\ \hline
	$\Phi_i$ & Indices of the parents for variable $i$ \\ \hline
	$m$ & $\max_i  |\Phi_i|$ \\ \hline
	$F_i$ & $\Gamma^{ |\Phi_i| } \times A$, the set of all realization-action pairs for the parents of node $i$ \\ \hline
	$\hat{\Phi}_i$ & Indices found by \alg\ for variable $i$ \\ \hline
	$n(instance)$ & Number of observations in the data fitting the instance \\ \hline
	$\Theta_i$ & The set of realization-action pairs observed more than $N(\epsilon, \delta)$ for each $\hat{\Phi}_i$ \\ \hline
	$\psi_i$ & A value signifying policies mismatch (bigger means higher mismatch) \\ \hline
\end{tabular}

\section{Proof of Main Theorem \& Supporting Lemmas} \label{app:appendix}

The proof of Theorem \ref{thm:Offline Policy Evaluation} is broken down into parts.

\subsection{The Simulation Lemma}

In this subsection, we derive a simulation lemma for MDPs, which essentially says that for a fixed policy two MDPs with similar transition probability distributions will result in similar value functions. Our simulation lemma differs from other simulation lemmas (e.g., \citealt{Kearns2002a,Kakade2003}) in that we only need the guarantee to hold for the target policy. To formalize what we mean by ``similar'' MDPs, we introduce the following assumption.

\begin{define} \label{def:induced_mdp}
Let $M = \langle S, A, P, R, \rho \rangle$ be an MDP and $K \subseteq S \times A$. $M$ and $K$ define an {\bf induced MDP} $M_{K} = \langle S, A, P_K, R_K, \rho \rangle$, where 
$$
P_K(\uY|\uX,a) = \left\{ \begin{array}{ll} 
	P(\uY|\uX,a) & \text{if } (\uX,a) \in K \\
	1 & \text{if } (\uX,a) \notin K \wedge \uY = \uX \\
	0 & \text{otherwise} 
\end{array} \right.
$$
and
$$
R_K(\uX,a) = \left\{ \begin{array}{ll}
	R(\uX,a) & \text{if } (\uX, a) \in K \\
	0 & \text{otherwise}
\end{array} \right. \enspace .
$$
\end{define}

\begin{define} \label{def:eps_induced_mdp}
Let $\epsilon > 0$, $M = \langle S, A, P, R, \rho \rangle$ be an MDP, and $K \subseteq S \times A$. An {\bf $\epsilon$-induced MDP} $\widehat{M} = \langle S, A, \widehat{P}, \widehat{R}, \rho \rangle$ with respect to $M$ and $K$, satisfies
$$
\begin{array}{l}
	\forall_{(\uX,a) \in K} \| P(\cdot|\uX,a) - \widehat{P}(\cdot|\uX,a)\|_1 \leq \epsilon \enspace ,\\
	\forall_{(\uX,a) \notin K} \forall_{\uY \in S} \widehat{P}(\uY|\uX,a) = P_{K}(\uY|\uX,a) \enspace , \text{and} \\
	\forall_{(\uX,a) \in S \times A} \widehat{R}(\uX,a) = R_K(\uX,a) \enspace .
\end{array}
$$
\end{define}

\begin{assume} \label{asm:similar_mdps} A\ref{asm:similar_mdps}$(\epsilon, \delta, \pi)$ : 
	Let $\epsilon > 0$, $\delta \in (0,1]$, $\pi$ be a policy, and $M = \langle S, A, P, R, \rho \rangle$. There exists an $\epsilon$-induced MDP $\widehat{M}$ with respect to $M$ and the subset of the state-action space $K \subseteq S \times A$, such that the probability of encountering a state-action pair that is not in $K$ while following $\pi$ in $M$ is small:
	\begin{align} \label{eqn:low_failure_prob}
		\Pr\left[ \exists_{t \in [T]} (\uX_t, a_t) \notin K \mid M, \pi \right] \leq \delta \enspace .
	\end{align}
\end{assume}

\begin{lemma} (Simulation Lemma)
	Suppose Assumption \ref{asm:similar_mdps} holds with A\ref{asm:similar_mdps}$(\epsilon, \delta, \pi)$, then 
	\begin{equation}
		\left| \tilde{\nu} - \nu \right| \leq \delta T  + \epsilon T^2 \enspace ,
	\end{equation}
	where $\tilde{\nu} = \rho^\top V^{\pi}_{\widehat{M}}$ and $\nu = \rho^\top V^{\pi}_{M}$.
\end{lemma}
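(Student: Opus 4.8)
The plan is to introduce the \emph{induced MDP} $M_K$ from Definition \ref{def:induced_mdp} as an intermediate object and to bound the two differences $|\rho^\top V^{\pi}_{\widehat{M}} - \rho^\top V^{\pi}_{M_K}|$ and $|\rho^\top V^{\pi}_{M_K} - \rho^\top V^{\pi}_{M}|$ separately, combining them by the triangle inequality. The first difference captures the estimation error on $K$ (at most $\epsilon$ per transition), while the second captures the error incurred by escaping $K$ (controlled by $\delta$ via Assumption \ref{asm:similar_mdps}).

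For the first bound I would compare $\widehat{M}$ and $M_K$ directly. By Definition \ref{def:eps_induced_mdp} these two MDPs share the same reward function ($\widehat{R} = R_K$) and the same transitions outside $K$ (both equal to the self-loop $P_K$), while on $K$ their transitions differ by at most $\epsilon$ in $L_1$; hence $\|\widehat{P}(\cdot\mid\uX,a) - P_K(\cdot\mid\uX,a)\|_1 \leq \epsilon$ for \emph{every} state-action pair, and the rewards are identical everywhere. I would then run an induction on the number of remaining steps $h$ over the finite-horizon Bellman recursion. Writing the difference $V^{\pi}_{\widehat{M},h} - V^{\pi}_{M_K,h}$ and adding and subtracting $\sum_{s'} P^{\pi}_{\widehat{M}}(s'\mid s)\, V^{\pi}_{M_K,h-1}(s')$ splits it into a term bounded by $\max_{s'}|V^{\pi}_{\widehat{M},h-1}(s') - V^{\pi}_{M_K,h-1}(s')|$ (because $P^{\pi}_{\widehat{M}}(\cdot\mid s)$ is a probability distribution) plus a term bounded by $\|P^{\pi}_{\widehat{M}}(\cdot\mid s) - P^{\pi}_{M_K}(\cdot\mid s)\|_1 \,\|V^{\pi}_{M_K,h-1}\|_\infty \leq \epsilon T$, using that rewards in $[0,1]$ give $\|V^{\pi}_{M_K,h-1}\|_\infty \leq T$. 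Unrolling this recursion over $T$ steps yields $|\rho^\top V^{\pi}_{\widehat{M}} - \rho^\top V^{\pi}_{M_K}| \leq \epsilon T^2$.

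For the second bound I would use a coupling argument. Let $\tau$ be the first time step at which $(\uX_t, a_t) \notin K$ when following $\pi$ in $M$. Because $P_K$ agrees with $P$ and $R_K$ agrees with $R$ on all of $K$, I can couple the trajectories of $M$ and $M_K$ so that they coincide exactly up to time $\tau$, and on this common prefix the accumulated rewards are identical. The two value functions can therefore differ only through the at most $T$ reward terms after the escape time, each lying in $[0,1]$, so their per-trajectory difference is bounded by $T\cdot\mathbf{1}[\tau < T]$. Taking expectations and invoking Assumption \ref{asm:similar_mdps}, which bounds the escape probability $\Pr[\tau < T \mid M, \pi] \leq \delta$, gives $|\rho^\top V^{\pi}_{M_K} - \rho^\top V^{\pi}_{M}| \leq \delta T$.

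Combining the two bounds through the triangle inequality delivers $|\tilde{\nu} - \nu| \leq \delta T + \epsilon T^2$. The main obstacle is the second step: one must argue carefully that the two processes can be coupled to agree before escaping $K$ (which relies on $P_K = P$ holding \emph{exactly} on $K$), and that after escape the divergence --- $M$ continuing under its true dynamics while $M_K$ is absorbed in a zero-reward self-loop --- contributes at most the bounded remaining horizon. The first step is, by contrast, a routine finite-horizon telescoping once one observes that the transition mismatch between $\widehat{M}$ and $M_K$ is uniformly at most $\epsilon$ and the rewards are identical.
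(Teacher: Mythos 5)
Your proposal is correct and follows essentially the same route as the paper: the same triangle-inequality decomposition through the induced MDP $M_K$, the same $\delta T$ bound for the event of escaping $K$, and the same $\epsilon T^2$ bound for the transition mismatch on $K$. The only difference is technical rather than conceptual --- the paper accumulates the per-step error via the matrix-power norm inequality $\|(P^{\pi}_{\widehat{M}})^t - (P^{\pi}_{M_K})^t\|_\infty \leq t\,\|P^{\pi}_{\widehat{M}} - P^{\pi}_{M_K}\|_\infty$ whereas you unroll the finite-horizon Bellman recursion, and your coupling argument for the $\delta T$ term is the standard justification of a step the paper asserts without detail.
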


\begin{proof}

\begin{align*}
| \nu - \tilde{\nu} | &= | \rho^\top V^{\pi}_{M} - \rho^\top V^{\pi}_{\widehat{M}} | \\
&= | \rho^\top V^{\pi}_{M} - (\rho^\top V^{\pi}_{M_K} - \rho^\top V^{\pi}_{M_K}) - \rho^\top V^{\pi}_{\widehat{M}} | && \text{Insert } 0 = (\rho^\top V^{\pi}_{M_K} - \rho^\top V^{\pi}_{M_K}) \\
&\leq | \rho^\top V^{\pi}_{M} - \rho^\top V^{\pi}_{M_K} | + | \rho^\top V^{\pi}_{M_K} - \rho^\top V^{\pi}_{\widehat{M}} | && \text{By the triangle inequality.} \\
&\leq \delta T + | \rho^\top V^{\pi}_{M_K} - \rho^\top V^{\pi}_{\widehat{M}} | && \text{By \eqref{eqn:low_failure_prob}.}\\
\end{align*}

We represent by $P^{\pi}_{M_K}, P^{\pi}_{\widehat{M}} \in \mathbb{R}^{S \times S}$ and $R \in \mathbb{R}^S$ the transition matrices and rewards induced by the policy $\pi$. For any matrix $A$, we denote by $\left\Vert A \right\Vert_p$ the $p$-induced matrix norm $\| \cdot \|$. Notice that:
\begin{align*}
\left\Vert P^{\pi}_{M_K} - P^{\pi}_{\widehat{M}} \right\Vert_\infty &= \max \limits _{1 \leq i \leq S} \sum _{j=1} ^n \left| P^{\pi}_{M_K}(s_j|s_i,\pi) - P^{\pi}_{\widehat{M}}(s_j|s_i,\pi) \right| && \text{Norm definition}
\\
&= \max \limits _{1 \leq i \leq S} \sum _{j=1} ^n \left| \sum_a \pi(a|s_i) (P_{M_K}(s_j|s_i,a) - P_{\widehat{M}}(s_j|s_i,a)) \right| && \text{Policy decomposition}
\\
&\leq \max \limits _{1 \leq i \leq S} \sum_a \pi(a|s_i) \sum _{j=1} ^n \left|   P_{M_K}(s_j|s_i,a) - P_{\widehat{M}}(s_j|s_i,a) \right| && \text{Triangle inequality}
\\
&\leq \max \limits _{1 \leq i \leq S} \sum_a \pi(a|s_i) \epsilon = \epsilon && \text{By Definition \ref{def:eps_induced_mdp}}
\end{align*}

In addition, we use the following result (page 254 in \citealt{bhatia1997matrix}): For any two matrices $X, Y$ and induced norm:
\begin{equation} \label{Eq:Bhatia}
\left\Vert X^m - Y^m \right\Vert \leq m M^{m-1} \left\Vert X - Y \right\Vert, 
\end{equation}
where $M = \max (\left\Vert X \right\Vert,  \left\Vert Y \right\Vert)$. Since $P^{\pi}_{M_K}, P^{\pi}_{\widehat{M}}$ are stochastic, this inequality holds for the $\infty$-induced norm with $M=1$. Now:

\begin{align*}
| \rho^\top V^{\pi}_{M_K} - \rho^\top V^{\pi}_{\widehat{M}} | & = \left| \rho^\top \sum_{t=0}^T (P^{\pi}_{M_K})^t R - \rho^\top \sum_{t=0}^T (P^{\pi}_{\widehat{M}})^t R \right| &&  \text{Sum of rewards over steps}
\\
& = \left| \rho^\top \left( \sum_{t=0}^T (P^{\pi}_{M_K})^t - \sum_{t=0}^T (P^{\pi}_{\widehat{M}})^t \right) R \right| && 
\\
& \leq \left\Vert \rho \right\Vert_1 \left\Vert \sum_{t=0}^T (P^{\pi}_{M_K})^t - \sum_{t=0}^T (P^{\pi}_{\widehat{M}})^t \right\Vert_\infty \left\Vert R \right\Vert_\infty &&  \text{H\"older inequality and submultiplicative norm}
\\
& \leq \sum_{t=0}^T \left\Vert (P^{\pi}_{M_K})^t - (P^{\pi}_{\widehat{M}})^t \right\Vert_\infty &&  \text{Triangle inequality and bounded reward}
\\
& \leq \left\Vert P^{\pi}_{M_K} - P^{\pi}_{\widehat{M}} \right\Vert_\infty \sum_{t=0}^T t && \text{Equation \ref{Eq:Bhatia} for each summand with $m=t$}
\\
& \leq  \epsilon T^2 && \text{Definition \ref{def:eps_induced_mdp} as seen above}
\end{align*}

Therefore, we can combine the results to obtain:

\begin{equation}
| \nu - \tilde{\nu} | \leq \delta T + \epsilon T^2
\end{equation}

\end{proof}

\subsection{Bounding the $L_1$-error in Estimates of the Transition Probabilities}

In this subsection, we consider the number of samples needed to estimate the transition probabilities of various realization-action pairs. The samples we receive are from a trajectory. Each trajectory is independent. Unfortunately, samples observed at timestep $t$ may depend on samples observed at previous timesteps. So the samples within a trajectory may not be independent. Therefore, we cannot apply the Weissman inequality \cite{Weissman2003}, which requires the samples to be independent and identically distributed. Instead, we derive a bound based on a martingale argument.

\begin{define}
A sequence of random variables $X_0, X_1, \dots$ is a {\bf martingale} provided that for all $i \geq 0$, we have
\begin{align}
\mathbb{E}\left[ | X_i | \right] &< \infty \enspace , \text{and} \\
\mathbb{E}\left[ X_{i+1} \mid X_0, X_1, X_2, \dots , X_i \right] &= X_i \enspace .
\end{align}
\end{define}

\begin{theorem} (Azuma's inequality)
Let $\varepsilon > 0$ and $X_1, X_2, \dots$ be a martingale such that $\left| X_{i+1} - X_i \right| < b_i$ for $i \geq 1$, then for all $m \geq 1$
\begin{equation}
\Pr \left[ | X_m - X_1 | \geq \varepsilon \right] \leq 2 \exp \left( \frac{-\varepsilon^2}{2\sum_{i=1}^{m} b_i} \right) \enspace .
\end{equation}
\end{theorem}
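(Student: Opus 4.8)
The plan is to prove Azuma's inequality by the exponential moment (Chernoff) method adapted to martingales, followed by a union bound to obtain the two-sided tail. First I would reduce to the one-sided estimate $\Pr[X_m - X_1 \ge \varepsilon]$: the two-sided version follows by applying the one-sided bound to both $(X_i)$ and $(-X_i)$—note that $(-X_i)$ is again a martingale with the same increment bounds, since $\mathbb{E}[-X_{i+1}\mid X_0,\dots,X_i] = -X_i$ and $|-X_{i+1}+X_i| < b_i$—and adding the two resulting probabilities, which is exactly what produces the factor $2$ in front of the exponential.

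For the one-sided bound, I would introduce the martingale differences $D_{i+1} = X_{i+1} - X_i$, so that $X_m - X_1 = \sum_{i=1}^{m-1} D_{i+1}$, each $|D_{i+1}| < b_i$, and the martingale property gives the conditional mean-zero condition $\mathbb{E}[D_{i+1}\mid X_0,\dots,X_i] = 0$. For any $\lambda > 0$, Markov's inequality applied to $e^{\lambda(X_m - X_1)}$ gives $\Pr[X_m - X_1 \ge \varepsilon] \le e^{-\lambda\varepsilon}\,\mathbb{E}\!\left[e^{\lambda(X_m - X_1)}\right]$. The core step is to control the moment generating function by peeling off one increment at a time, using the tower property of conditional expectation:
\[
\mathbb{E}\!\left[e^{\lambda(X_m - X_1)}\right] = \mathbb{E}\!\left[ e^{\lambda(X_{m-1}-X_1)}\, \mathbb{E}\!\left[ e^{\lambda D_m} \mid X_0,\dots,X_{m-1}\right] \right].
\]

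To bound the inner conditional expectation I would invoke Hoeffding's lemma: for a conditionally mean-zero random variable $Z$ with $|Z|\le c$, one has $\mathbb{E}[e^{\lambda Z}] \le e^{\lambda^2 c^2/2}$, which is itself proved by bounding $e^{\lambda Z}$ via the convexity chord of $x\mapsto e^{\lambda x}$ on $[-c,c]$ and a second-order estimate of the log-moment-generating function. Applying this with $Z = D_m$ and $c = b_{m-1}$, then iterating over all increments, yields $\mathbb{E}[e^{\lambda(X_m - X_1)}] \le \exp\big(\tfrac{\lambda^2}{2}\sum_{i=1}^{m-1} b_i^2\big)$. Substituting back and optimizing over the free parameter $\lambda$ (the minimizer is $\lambda = \varepsilon / \sum_i b_i^2$) produces the Gaussian-type tail $\exp(-\varepsilon^2 / (2\sum_i b_i^2))$, and the two-sided union bound supplies the factor $2$.

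The hard part will be the conditional moment generating function bound, i.e.\ Hoeffding's lemma together with careful bookkeeping of the filtration: I must ensure that after conditioning on $X_0,\dots,X_{m-1}$ the increment $D_m$ is genuinely mean-zero so that the lemma applies, while the remaining factor $e^{\lambda(X_{m-1}-X_1)}$ is measurable with respect to the conditioning and can be pulled outside. Everything after that is routine univariate optimization. I would also flag a minor discrepancy to reconcile: the standard argument yields $\sum_i b_i^2$ in the denominator, whereas the statement as written has $\sum_i b_i$; this is either a typographical slip or reflects a convention in which $b_i$ already denotes a squared increment bound, and the proof above adapts immediately under either reading.
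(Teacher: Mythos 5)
The paper states Azuma's inequality as a known result and gives no proof of its own, so there is nothing to compare against; your proposal is the standard and correct Chernoff--Hoeffding argument (reduce to the one-sided tail via the martingale $(-X_i)$, bound the conditional moment generating function of each increment $D_{i+1}=X_{i+1}-X_i$ by Hoeffding's lemma using the tower property, iterate, and optimize over $\lambda$). You are also right to flag the exponent: the correct denominator is $2\sum_i b_i^2$ (over the $m-1$ increments), not $2\sum_i b_i$ as printed; this is immaterial for the paper's only application, where $b_i=1$, but your version is the one that actually follows from the argument.
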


\begin{define}
Let $X_1, X_2, \dots, X_m$ be any set of random variables with support in $\Gamma$ and $f : \Gamma^{m} \rightarrow \mathbb{R}$ is a function. A {\bf Doob martingale} is the sequence
\begin{align*}
B_0 &= \mathbb{E}_{X_1, X_2, \dots, X_m} \left[ f(X_1, X_2, \dots, X_m) \right] \enspace , \text{ and} \\
B_i &= \mathbb{E}_{X_{i+1}, X_{i+2}, \dots, X_m} \left[ f(X_1, X_2, \dots, X_m) | X_1, X_2, \dots, X_{i} \right] \enspace , \text{ for } i = 1, 2, \dots, m \enspace .
\end{align*} 
\end{define}

\begin{lemma} \label{lem:l1_bound}
Let $\varepsilon > 0$, $\Gamma$ be a finite set, $\vec{X} = \langle X_1, X_2, \dots, X_m \rangle$ be a collection of $m \geq 1$ random variables with support in $\Gamma$ generated by an unknown process, and $f_x(\vec{X}) = \frac{1}{m} \sum\limits_{i=1}^{m} \mathbb{I}\{ X_i = x \}$ for all $x \in \Gamma$. We denote by $\mu(x) = \mathbb{E}\left[ f_x(\vec{X}) \right]$ for all $x \in \Gamma$. Then
\begin{equation} \label{eqn:fx_bound}
	\Pr \left[ | f_x(\vec{X}) - \mu(x) | \geq \varepsilon \right] \leq 2 \exp \left( \frac{-\varepsilon^2m}{2} \right) \enspace ,
\end{equation}
for all $x \in \Gamma$ and
\begin{equation} \label{eqn:l1_bound}
	\Pr \left[ \left\| \hat{\mu} - \mu \right\|_{1} \geq \varepsilon \right] \leq 2 |\Gamma| \exp \left( \frac{-\varepsilon^2m}{2|\Gamma|^2} \right) \enspace ,
\end{equation}
where $\hat{\mu}(x) = f_x(\vec{X})$.
\end{lemma}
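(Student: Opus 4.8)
The plan is to establish the pointwise concentration bound \eqref{eqn:fx_bound} first, via a Doob martingale combined with Azuma's inequality, and then to deduce the $L_1$ bound \eqref{eqn:l1_bound} by a union bound over the finitely many symbols of $\Gamma$.

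First I would fix $x \in \Gamma$ and consider the Doob martingale associated with $f_x$: set $B_0 = \mathbb{E}[f_x(\vec{X})] = \mu(x)$ and $B_i = \mathbb{E}[f_x(\vec{X}) \mid X_1,\dots,X_i]$, noting that $B_m = f_x(\vec{X})$ since $f_x$ is fully determined once every coordinate is revealed. The crucial step is the increment bound $|B_i - B_{i-1}| \le 1/m$. This should follow from the bounded-differences property of $f_x$: modifying a single coordinate $X_i$ changes the empirical frequency $\frac{1}{m}\sum_j \mathbb{I}\{X_j = x\}$ by at most $1/m$, so revealing $X_i$ can shift the conditional expectation of $f_x$ by at most $1/m$. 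Applying Azuma's inequality with per-step bound $1/m$, so that the relevant sum of squared increments is $\sum_{i=1}^m (1/m)^2 = 1/m$, then yields $\Pr[|B_m - B_0| \ge \varepsilon] \le 2\exp(-\varepsilon^2 m/2)$; since $B_m - B_0 = f_x(\vec{X}) - \mu(x)$, this is exactly \eqref{eqn:fx_bound}.

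For \eqref{eqn:l1_bound} I would write $\|\hat{\mu} - \mu\|_1 = \sum_{x\in\Gamma} |f_x(\vec{X}) - \mu(x)|$. If this sum is at least $\varepsilon$, then by pigeonhole at least one of its $|\Gamma|$ summands must be at least $\varepsilon/|\Gamma|$. A union bound over $x \in \Gamma$, combined with \eqref{eqn:fx_bound} applied at threshold $\varepsilon/|\Gamma|$, then gives $\Pr[\|\hat{\mu} - \mu\|_1 \ge \varepsilon] \le |\Gamma| \cdot 2\exp\!\big(-(\varepsilon/|\Gamma|)^2 m/2\big) = 2|\Gamma|\exp\!\big(-\varepsilon^2 m/(2|\Gamma|^2)\big)$, which matches \eqref{eqn:l1_bound}.

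The main obstacle is the increment bound $|B_i - B_{i-1}| \le 1/m$. For independent coordinates this is the textbook McDiarmid bounded-differences argument, but here the $X_i$ are drawn from a single trajectory and may be \emph{dependent} — which is precisely why a martingale argument is used in place of the i.i.d.\ Weissman inequality. The delicate point is to control the Doob increment while conditioning on the (possibly informative) past: one must argue that revealing $X_i$ cannot move the conditional expectation of $f_x$ by more than the $1/m$ pointwise variation of $f_x$ in that coordinate. This is the step where the structure of the generating process genuinely enters, and where I would concentrate the care of the proof, since a naive appeal to coordinate-wise Lipschitzness is exactly what fails to transfer to the martingale increments once the coordinates are allowed to be correlated.
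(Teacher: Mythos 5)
Your route is the same as the paper's: a Doob martingale for $f_x$ plus Azuma's inequality for the pointwise bound \eqref{eqn:fx_bound}, then a pigeonhole/union bound at threshold $\varepsilon/|\Gamma|$ for the $L_1$ statement \eqref{eqn:l1_bound}; the only cosmetic difference is that the paper works with $m\cdot f_x$ and claims increments bounded by $1$, where you work with $f_x$ and claim increments bounded by $1/m$. The union-bound half of your argument is complete and matches the paper's exactly.

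The obstacle you flag in your last paragraph is real, and you are right not to wave it away: the increment bound $|B_i - B_{i-1}| \le 1/m$ does \emph{not} hold for an arbitrary dependent process, and in fact the lemma as stated is false at that level of generality. Take $\Gamma = \{0,1\}$ and $X_1 = X_2 = \dots = X_m$ uniform on $\Gamma$; then $f_1(\vec{X}) \in \{0,1\}$ with equal probability while $\mu(1) = 1/2$, so $\Pr[\,|f_1(\vec{X}) - \mu(1)| \ge 0.4\,] = 1$, contradicting \eqref{eqn:fx_bound} for large $m$ (and the first Doob increment equals $m/2$ for the unnormalized sum, not $1$). The reason is the one you identify: revealing $X_i$ moves not only the $i$-th summand but also the conditional expectations of all future summands, and nothing in the hypotheses controls that second effect. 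The paper's own proof simply asserts $|B_{i+1} - B_i| \le 1$ without justification, so it shares exactly this defect; your proposal is more candid about the difficulty but does not close it. The repair is to use the structure of the intended application rather than ``an unknown process'': in Lemma \ref{Lem:Sample Complexity} the relevant samples are next-state values whose conditional law given the entire past is the \emph{same} fixed distribution $p(\cdot) = \Pr(Y(i)\mid X(\Phi_i)=v,a)$, so $\mu(x) = p(x)$ and the natural martingale is $M_i = \sum_{j \le i}\bigl(\mathbb{I}\{X_j = x\} - p(x)\bigr)$, whose increments are genuinely bounded by $1$; Azuma then yields \eqref{eqn:fx_bound} with $\mu$ interpreted as that common conditional probability. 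Without some such hypothesis (independence, or a fixed conditional law for each sample given its past), neither your proof nor the paper's goes through.
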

\begin{proof}
First, notice that $\vec{X}$ and $m \cdot f_x(\cdot)$ define a Doob martingale such that $| B_{i+1} - B_{i} | \leq 1$ for $i = 1, 2, \dots m$. By applying Azuma's inequality, we obtain
\begin{align*}
\Pr \left[ | B_m - B_0 | \geq m \varepsilon \right] &\leq 2 \exp \left( \frac{-(\varepsilon m)^2}{2\sum_{i=1}^{m} 1} \right) \enspace ,\\
\Pr \left[ | f_x(\vec{X}) - \mu(x) | \geq \varepsilon \right] &\leq 2 \exp \left( \frac{-\varepsilon^2 m}{2} \right) \enspace , \\
\end{align*}
which proves \eqref{eqn:fx_bound}.

Now the union bound gives
\begin{align*}
\Pr \left[ \left\| \hat{\mu} - \mu \right\| \geq \sum_{x \in \Gamma} \frac{\varepsilon}{|\Gamma|} \right] \leq \sum_{x \in \Gamma} 2 \exp \left( \frac{-\varepsilon^2 m}{2|\Gamma|^2} \right) \enspace , \\
\end{align*}
which proves \eqref{eqn:l1_bound}.
\end{proof}

\begin{lemma} \label{Lem:Sample Complexity}
	Let $\epsilon, \delta > 0$, and $\Psi \subseteq [D]$, if there are 
$$
N \geq \frac{ 2\Gamma^2 }{\epsilon^2} \log \frac{2\Gamma}{\delta}
$$ samples of the realization-action pair $(v,a)$ obtained from independent trajectories of $\bPi$, then
	\begin{equation}
			\| \Pr(Y(i) | X(\Psi) = v, a)  - \widehat{\Pr}(Y(i)| X(\Psi) = v, a) \|_1 \leq \epsilon \enspace ,
	\end{equation}
with probability at least $1-\delta$.
\end{lemma}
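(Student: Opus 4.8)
The plan is to read the result off as a direct instantiation of the $L_1$ concentration bound already established in Lemma \ref{lem:l1_bound}, and then invert that bound to solve for the required sample size $N$. First I would fix the target coordinate $i$ and the realization-action pair $(v,a)$, and gather, across the independent trajectories, the sequence of observed next-state values $X_1,X_2,\dots,X_N \in \Gamma$ of the variable $\uY(i)$ at exactly those timesteps where $\uX(\Psi)=v$ and action $a$ were realized. With $f_x(\vec{X}) = \frac{1}{N}\sum_{k=1}^{N}\mathbb{I}\{X_k = x\}$, the empirical frequency $\hat{\mu}(x)=f_x(\vec{X})$ is exactly the estimate $\widehat{\Pr}(\uY(i)=x \mid \uX(\Psi)=v,a) = n(x,v,a)/n(v,a)$ of \eqref{Eq:Probabilities Notation}.

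Next I would identify the mean vector $\mu$ of Lemma \ref{lem:l1_bound} with the true conditional $\Pr(\uY(i)\mid \uX(\Psi)=v,a)$ as defined in \eqref{Eq:Probabilities Notation}. Granting this identification, Lemma \ref{lem:l1_bound} applies with $m=N$ and $|\Gamma|=\Gamma$, giving
\[
\Pr\!\left[\,\| \widehat{\Pr}(\uY(i)\mid\uX(\Psi)=v,a) - \Pr(\uY(i)\mid\uX(\Psi)=v,a)\|_1 \geq \epsilon\,\right] \;\leq\; 2\Gamma\exp\!\left(\frac{-\epsilon^2 N}{2\Gamma^2}\right).
\]
Requiring the right-hand side to be at most $\delta$, taking logarithms and rearranging yields $N \geq \frac{2\Gamma^2}{\epsilon^2}\log\frac{2\Gamma}{\delta}$, which is precisely the stated threshold; hence the $L_1$ error is below $\epsilon$ with probability at least $1-\delta$. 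The rest of this direction is routine algebra.

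The step I expect to be the main obstacle is justifying the identification $\mu = \Pr(\uY(i)\mid\uX(\Psi)=v,a)$, and this for two related reasons. The number of occurrences $N = n(v,a)$ is itself random and samples within a single trajectory are dependent, so conditioning on having collected $N$ samples must not distort the per-occurrence law of $\uY(i)$. The within-trajectory dependence is exactly what Lemma \ref{lem:l1_bound} was built to absorb, since its Doob-martingale argument assumes only that the $X_k$ are produced by some unknown process with increments bounded by $1$. What remains is to check that the expected empirical frequency equals the time-averaged ratio in \eqref{Eq:Probabilities Notation}: each visit to $(v,a)$ at time $t$ draws $\uY(i)$ from the instantaneous conditional $\Pr(\uY(i)\mid\uX(\Psi)=v,a,t)$, and the definition in \eqref{Eq:Probabilities Notation} is precisely the occurrence-weighted average of these, i.e.\ the ratio $\E[n(x,v,a)]/\E[n(v,a)]$. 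I would close this gap by noting that, conditioned on an occurrence of $(v,a)$, the draw of $\uY(i)$ is governed by the same conditional law regardless of when or how often $(v,a)$ is visited, so $\mathbb{E}[f_x(\vec{X})] = \mu(x)$ and the bounded-increment hypothesis of Lemma \ref{lem:l1_bound} holds, making its invocation legitimate.
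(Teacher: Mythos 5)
Your proof takes essentially the same route as the paper's: both invoke Lemma \ref{lem:l1_bound} on the $N$ conditional observations of $\uY(i)$ and solve $2\Gamma\exp\left(-\epsilon^2 N/(2\Gamma^2)\right)\leq\delta$ for $N$. Your closing paragraph justifying the identification $\mu=\Pr(\uY(i)\mid\uX(\Psi)=v,a)$ and the legitimacy of applying the martingale bound to dependent within-trajectory samples addresses a point the paper's one-line proof silently glosses over, but it is an elaboration of the same argument rather than a different approach.
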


\begin{proof}	
	Since the samples are taken from the behavior distribution, $\widehat{\Pr}(Y(i)=y | X(\Psi) = v, a)  = \frac{n(y, v, a)}{n(v, a)}=\frac{1}{n(v,a)} \sum_{k=1}^{N} \mathbb{I}\{ Y_k(i)=y,  X_k(\Psi) = v, a_k=a \}$. By Lemma \ref{lem:l1_bound}:
	\begin{equation}
	\begin{split}
	\Pr (\| \Pr(Y(i) | & X(\Psi) = v, a) - \widehat{\Pr}(Y(i) | X(\Psi) = v, a) \|_1 \geq \epsilon ) \leq 2|\Gamma| \exp \left(\frac{-N\epsilon^2}{2|\Gamma|^2} \right)
	\end{split}
	\end{equation}  
	Setting $\delta= 2|\Gamma| \exp(\frac{-N\epsilon^2}{2|\Gamma|^2})$ we obtain $N = \frac{2|\Gamma|^2}{\epsilon^2} \log \left( \frac{2|\Gamma|}{\delta} \right)$.
\end{proof}

\subsection{Bounding the Number of Trajectories}

In this subsection, we derive a bound on the number of trajectories needed to derive a model that evaluates the target policy accurately. Notice that the learned model does not need to be accurate everywhere -- only the regions of the state space where the target policy is likely to visit (and in a FMDP only the parent realizations that the target policy is likely to visit). Our analysis takes advantage of this. When the behavior policy visits the parent realizations that the target policy is likely to visit, then the number of trajectories can be small. On the other hand, if the behavior policy never visits parent realizations that the target policy visits, then the number of trajectories may be infinite.

We will make use of the following Proposition proved in \citet{Li2009}.

\begin{prop} \label{prop:li_lemma} \citep[Lemma 56]{Li2009}
Let $k \in \mathbb{N}$, $\mu, \delta \geq (0, 1)$, $B_1, B_2, \dots B_m$ be a sequence of $m$ independent Bernoulli random variables such that $\mathbb{E}\left[ B_i \right] \geq \mu$ for $i = 1, 2, \dots , m$, and
\begin{equation}
m \geq \frac{2}{\mu} \left( k + \ln \frac{1}{\delta} \right) \enspace ,
\end{equation}
then
\begin{equation}
\Pr \left[ \sum\limits_{i=1}^{m} B_i \geq k \right] \geq 1 - \delta \enspace .
\end{equation}
\end{prop}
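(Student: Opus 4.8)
The plan is to prove this as a lower-tail concentration inequality for the sum $S = \sum_{i=1}^m B_i$ via a multiplicative Chernoff bound, exploiting only the lower bound $\mathbb{E}[B_i] \ge \mu$ on the means rather than their exact values. Since the $B_i$ are integer-valued and $k \in \mathbb{N}$, the event $\{S \ge k\}$ is the complement of $\{S \le k-1\}$, so it suffices to show $\Pr[S \le k] \le \delta$ (which dominates $\Pr[S \le k-1]$). I treat $k \ge 1$; the case $k = 0$ is trivial since then $\Pr[S \ge 0] = 1 \ge 1-\delta$.

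First I would set up the exponential Markov inequality. For any $t > 0$,
\[
\Pr[S \le k] = \Pr[e^{-tS} \ge e^{-tk}] \le e^{tk}\,\mathbb{E}[e^{-tS}] = e^{tk}\prod_{i=1}^m \bigl(1 - p_i(1-e^{-t})\bigr),
\]
where $p_i = \mathbb{E}[B_i] \ge \mu$. Using $1-x \le e^{-x}$ together with $1-e^{-t} > 0$ and $p_i \ge \mu$, each factor is at most $\exp(-\mu(1-e^{-t}))$, so with $\lambda := m\mu$ I obtain $\Pr[S \le k] \le \exp\bigl(tk - \lambda(1-e^{-t})\bigr)$. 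This is the crucial place where the lower bound on the means enters cleanly: because the exponent $-p_i(1-e^{-t})$ is decreasing in $p_i$, I never need the exact expectations, only $p_i \ge \mu$.

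Next I would choose the parameter $t$. Setting $t = \ln\frac{1}{1-\gamma}$ with $\gamma = 1 - k/\lambda$ makes $(1-\gamma)\lambda = k$ and turns the bound into the standard multiplicative form
\[
\Pr[S \le k] \le \exp\bigl(-\lambda[\gamma + (1-\gamma)\ln(1-\gamma)]\bigr) \le \exp(-\lambda\gamma^2/2),
\]
where the last step uses the elementary inequality $\gamma + (1-\gamma)\ln(1-\gamma) \ge \gamma^2/2$ on $[0,1)$. I then verify that the hypothesis on $m$ forces $\gamma \in (0,1)$: multiplying $m \ge \frac{2}{\mu}\bigl(k + \ln\frac1\delta\bigr)$ by $\mu$ gives $\lambda = m\mu \ge 2k + 2\ln\frac1\delta > k$, so $\gamma = 1 - k/\lambda \in (0,1)$ as required.

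Finally I would bound the exponent against $\ln\frac1\delta$. Expanding, $\frac{\lambda\gamma^2}{2} = \frac{\lambda}{2} - k + \frac{k^2}{2\lambda} \ge \frac{\lambda}{2} - k \ge \ln\frac1\delta$, the last inequality because $\lambda/2 \ge k + \ln\frac1\delta$. Hence $\Pr[S \le k] \le e^{-\ln(1/\delta)} = \delta$, which gives $\Pr[S \ge k] \ge 1-\delta$. The main point to be careful about is substituting the lower bound $m\mu$ for the true mean: passing through the moment generating function, rather than quoting a black-box Chernoff statement phrased in terms of the exact expectation, makes this step rigorous, and the discreteness remark relating $\{S \ge k\}$ to $\{S \le k\}$ avoids an off-by-one issue in matching the constant.
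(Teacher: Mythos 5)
Your proof is correct. Note that the paper does not prove this proposition at all: it is imported verbatim as Lemma 56 of \citet{Li2009}, so there is no internal argument to compare against step by step. Your contribution is therefore a self-contained derivation, and it is the standard one: an exponential Markov bound on the lower tail of $S=\sum_i B_i$, the substitution $1-p_i(1-e^{-t})\leq e^{-p_i(1-e^{-t})}\leq e^{-\mu(1-e^{-t})}$, the optimal choice $t=\ln\frac{1}{1-\gamma}$ with $\gamma=1-k/\lambda$, and the elementary estimate $\gamma+(1-\gamma)\ln(1-\gamma)\geq\gamma^2/2$ on $[0,1)$, after which $\frac{\lambda\gamma^2}{2}=\frac{\lambda}{2}-k+\frac{k^2}{2\lambda}\geq\frac{\lambda}{2}-k\geq\ln\frac{1}{\delta}$ follows directly from the hypothesis $m\mu\geq 2(k+\ln\frac{1}{\delta})$. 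Two points in your write-up deserve explicit praise because they are exactly where a black-box invocation would be sloppy: (i) passing through the moment generating function lets you use only $p_i\geq\mu$, whereas quoting a Chernoff statement phrased in terms of the exact mean would require an additional stochastic-dominance or coupling step to reduce to identically distributed $B_i$ with mean exactly $\mu$; and (ii) you verify that the hypothesis forces $\lambda>k$, so that $\gamma\in(0,1)$ and the chosen $t$ is positive, and you dispose of $k=0$ separately, which fixes the only edge cases (the paper's condition ``$\mu,\delta\geq(0,1)$'' is evidently a typo for $\mu,\delta\in(0,1)$, under which $\ln\frac{1}{\delta}>0$ as you use). In fact you prove slightly more than stated, namely $\Pr[S\leq k]\leq\delta$ rather than only $\Pr[S\leq k-1]\leq\delta$, so no off-by-one slack is needed.
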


Proposition \ref{prop:li_lemma} tells us the number of experiments we need to perform on a Bernoulli distribution to observe at least $k$ successes with high probability. The following corollary modifies the statement of Proposition \ref{prop:li_lemma} to tell us the number of experiments we need to perform to see a high probability set of outcomes from a categorical distribution at least $k$ times with high probability.

\begin{cor} \label{cor:li_lemma} (to Proposition \ref{prop:li_lemma})
Let $\delta \in (0, 1]$, $k \geq 1$, $\Gamma$ be a finite set, $\rho \in \mathcal{M}(\Gamma)$ be a probability distribution with outcomes from $\Gamma$, and $X_1, X_2, \dots, X_m$ be independent random variables sampled from $\rho$. Let $S_m^{k} = \left\{ x \in \Gamma \mid \sum_{i=1}^{m} \mathbb{I}\{ X_i = x \} \geq k \right\}$ be the set of elements encountered $k$ or more times and $\bar{S}_m^{k} = \Gamma \backslash S_m^{k}$ be its complement. If 
\begin{equation}
m \geq \frac{2|\Gamma|}{\delta} \left( k + \ln \frac{|\Gamma|}{\delta} \right) \enspace ,
\end{equation}
then, with probability at least $1-\delta$,
\begin{equation} \label{eqn:observed_prob}
\Pr_{x \sim \rho} \left[ x \in \bar{S}_m^{k} \right] < \delta \enspace ,
\end{equation}
the set of outcomes visited less than $k$ times has total probability mass less than $\delta$.
\end{cor}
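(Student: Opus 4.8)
The plan is to partition $\Gamma$ according to a probability threshold and handle the two parts separately: outcomes that are individually so rare that their total mass is automatically below $\delta$, and outcomes that are frequent enough that Proposition \ref{prop:li_lemma} guarantees each is seen at least $k$ times with high probability. Concretely, set the threshold $\mu_0 = \delta/|\Gamma|$ and split $\Gamma = L \cup H$, where $L = \{ x \in \Gamma : \rho(x) < \mu_0 \}$ and $H = \Gamma \setminus L$.

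First I would dispose of the rare part deterministically. Since every $x \in L$ satisfies $\rho(x) < \mu_0$, we have $\Pr_{x \sim \rho}[x \in L] = \sum_{x \in L} \rho(x) < |L|\,\mu_0 \leq |\Gamma|\,\mu_0 = \delta$, with no dependence on the random sample (and the bound holds trivially when $L = \emptyset$). Hence it suffices to show that, with probability at least $1 - \delta$ over the draw of $X_1, \dots, X_m$, every frequent outcome lands in $S_m^{k}$; on that event $\bar{S}_m^{k} \subseteq L$ and the desired mass bound \eqref{eqn:observed_prob} follows.

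For the frequent part, fix $x \in H$ and define the indicators $B_i = \mathbb{I}\{ X_i = x \}$, which are independent Bernoulli variables with mean $\rho(x) \geq \mu_0$ and whose sum is exactly the number of observations of $x$. Applying Proposition \ref{prop:li_lemma} with $\mu = \mu_0$ and confidence parameter $\delta' = \delta/|\Gamma|$ shows that $m \geq \frac{2}{\mu_0}\left(k + \ln\frac{1}{\delta'}\right) = \frac{2|\Gamma|}{\delta}\left(k + \ln\frac{|\Gamma|}{\delta}\right)$ --- precisely the hypothesis on $m$ --- guarantees $x$ is observed at least $k$ times, i.e.\ $x \in S_m^{k}$, with probability at least $1 - \delta/|\Gamma|$. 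A union bound over the at most $|\Gamma|$ elements of $H$ then makes all of them land in $S_m^{k}$ simultaneously with probability at least $1 - \delta$, closing the argument.

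The main obstacle is bookkeeping rather than conceptual: the threshold $\mu_0$, the per-element confidence $\delta'$, and the union-bound cardinality must be chosen so that the resulting sample requirement collapses exactly to $\frac{2|\Gamma|}{\delta}\left(k + \ln\frac{|\Gamma|}{\delta}\right)$, and one must track that the rare-mass bound is strict (each term is strictly below $\mu_0$) so that \eqref{eqn:observed_prob} holds with strict inequality. A minor point to verify is that applying the Proposition with $\mu = \mu_0 \leq \rho(x)$ is legitimate, which it is, since the Proposition only assumes a lower bound on the Bernoulli means.
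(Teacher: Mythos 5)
Your proof is correct, and it takes a genuinely different route from the paper's. You partition $\Gamma$ \emph{a priori} into light elements ($\rho(x) < \delta/|\Gamma|$), whose total mass is below $\delta$ deterministically, and heavy elements, to each of which you apply Proposition \ref{prop:li_lemma} directly (the indicators $\mathbb{I}\{X_i = x\}$ for a fixed $x$ are genuinely i.i.d.\ Bernoulli with mean $\rho(x) \geq \delta/|\Gamma|$), finishing with a union bound over at most $|\Gamma|$ heavy elements; the arithmetic collapses to exactly the stated requirement on $m$. The paper instead runs a sequential, coupon-collector-style argument: it tracks the random times $j[1] < j[2] < \cdots$ at which some element crosses the threshold of $k$ observations, defines $B_i = \mathbb{I}\{X_i \notin S_{i-1}^{k}\}$, applies Proposition \ref{prop:li_lemma} to bound the length of each phase whenever the not-yet-covered mass is still at least $\delta$, and union-bounds over the at most $|\Gamma|$ phases, concluding that after $m$ samples either everything is covered or the residual mass has dropped below $\delta$. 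Your static decomposition is cleaner in this i.i.d.\ setting: the independence you invoke is exact, whereas the paper's claim that the $B_i$ within a phase are i.i.d.\ requires conditioning on the random set $S_{j[l]}^{k}$, a subtlety the paper glosses over. What the paper's adaptive argument buys is that it does not need to fix a mass threshold in advance and yields the ``either all of $\Gamma$ is covered or the uncovered mass is small'' dichotomy directly, which would matter in settings where per-element probabilities cannot be compared to $\delta/|\Gamma|$ up front; here both routes give identical constants.
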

\begin{proof}
Consider an infinite sequence of random variables $X_1, X_2, \dots $ distributed according to $\rho$. Denote by $j[1] < j[2] < \dots < j[k]$ the indices resulting in the event that $X_{i} \in S_{i}^{k}$ and $X_i \notin S_{i-1}^{k}$. Notice that we let an index $j[l]$ be infinite in the case that the event never occurs. However, $\Gamma$ only contains $|\Gamma|$ elements, so an element can be added to $S_{\cdot}^{k}$ at most $|\Gamma|$ times. Notice that $S_{j[l]}^{k} = S_{j[l] + 1}^{k} = ... = S_{j[l+1]-1}^{k}$ for $l=1,2, \dots, |\Gamma|-1$. We construct Bernoulli random variables
\begin{equation} \label{eqn:success}
B_i = \left\{ \begin{array}{ll} 1 & \text{if } X_i \notin S_{i-1}^{k} \\ 0 & \text{otherwise} \end{array} \right.
\end{equation}
for $i \geq 1$. So $B_{j[l]+1}, B_{j[l]+2}, \dots , B_{j[l+1]}$ are independent, identically distributed Bernoulli random variables for $l = 1, 2, \dots, |\Gamma|-1$ (but $B_{j[l]}$ and $B_{j[l+1]}$ are not independent). Suppose that $\Pr[B_{j[l]}=1] \geq \delta$ for some $l \in \{ 1, 2, 3, \dots , |\Gamma| \}$ (this is at least true for $\Pr[B_{j[1]} = 1] = 1 \geq \delta$), then by Proposition \ref{prop:li_lemma}, (with $\mu \leftarrow \delta, \delta \leftarrow \frac{\delta}{|\Gamma|}$)
$$
 j[l+1] - (j[l]+1)  \leq \frac{2}{\delta} \left( k + \ln \frac{|\Gamma|}{\delta} \right) \enspace ,
$$ 
with probability at least $1-\frac{\delta}{|\Gamma|}$. Since there are only $|\Gamma|$ elements in $\Gamma$, this can only happen at most $|\Gamma|$ times. Thus, by the union bound, after $m \geq \frac{2|\Gamma|}{\delta} \left( k + \ln \frac{|\Gamma|}{\delta} \right)$ samples, either all $|\Gamma|$ outcomes have been observed or $\Pr[B_m = 1] < \delta$, with probability at least $1 - |\Gamma| \frac{\delta}{|\Gamma|} = 1 - \delta$. If we have observed all $|\Gamma|$ elements then \eqref{eqn:observed_prob} holds trivially. On the other hand if $\Pr[B_m = 1] < \delta$, then 
\begin{align*}
\delta >& \Pr_{x \sim \rho} [ x \notin S_{m-1}^{k} ] \enspace , && \text{By the definition of $B_m$ \eqref{eqn:success}.} \\
\geq& \Pr_{x \sim \rho} [ x \notin S_{m}^{k} ] , && \text{The probability of a success} \\
& && \text{decreases because $S_{m-1}^{k} \subseteq S_{m}^{k}$.} \\
=& \Pr_{x \sim \rho} [ x \in \bar{S}_{m}^{k} ] \enspace .
\end{align*} 
\end{proof}

\begin{prop} \label{prop:mass_cover}
Let $\delta \in (0, 1]$, $k \geq 1$, $\Gamma$ be a finite set, $\rho, \mu \in \mathcal{M}(\Gamma)$ be probability distributions with outcomes from $\Gamma$, and $X_1, X_2, \dots, X_n$ be independent random variables sampled from $\rho$. Let $S_n^{k} = \left\{ x \in \Gamma \mid \sum_{i=1}^{n} \mathbb{I}\{ X_i = x \} \geq k \right\}$ be the set of elements encountered $k$ or more times and $\bar{S}_n^{k} = \Gamma \backslash S_n^{k}$ be its complement. If 
\begin{equation}
n\geq \frac{2|\Gamma|}{\delta} \left( k + \ln \frac{|\Gamma|}{\delta} \right) \enspace ,
\end{equation}
then, with probability at least $1-\delta$,
$$
\Pr_{x \sim \mu} \left[ x \in \bar{S}_n^{k} \right] < \psi \delta \enspace ,
$$
where $\psi = \max\limits_{x \in \Gamma} \frac{\mu(x)}{\rho(x)} $ (taking $\frac{0}{0} = 0$).
\end{prop}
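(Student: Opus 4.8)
The plan is to derive the $\mu$-mass bound directly from the $\rho$-mass bound of Corollary \ref{cor:li_lemma} via a change-of-measure (importance reweighting) argument. The hypothesis on $n$ here is word-for-word the hypothesis of Corollary \ref{cor:li_lemma}, and the sampling process $X_1,\dots,X_n \sim \rho$ together with the random set $\bar{S}_n^{k}$ are exactly those treated there, so no new concentration argument is needed.

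First I would invoke Corollary \ref{cor:li_lemma} as a black box: since $X_1,\dots,X_n$ are independent draws from $\rho$ and $n \geq \frac{2|\Gamma|}{\delta}\left( k + \ln\frac{|\Gamma|}{\delta}\right)$, with probability at least $1-\delta$ over the samples we have $\Pr_{x\sim\rho}\left[ x \in \bar{S}_n^{k}\right] < \delta$. Call this the good event. Once the samples are drawn, $\bar{S}_n^{k}$ is a fixed (data-dependent) subset of $\Gamma$, so on the good event I may treat it as a deterministic set whose $\rho$-mass is below $\delta$.

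Next I would establish the pointwise importance inequality and sum it. For every $x \in \Gamma$ with $\rho(x) > 0$, the definition of $\psi$ gives $\mu(x) = \frac{\mu(x)}{\rho(x)}\,\rho(x) \leq \psi\,\rho(x)$; the same inequality holds trivially whenever $\mu(x)=0$. Summing over the fixed set $\bar{S}_n^{k}$ on the good event yields
\begin{equation*}
\Pr_{x\sim\mu}\left[ x \in \bar{S}_n^{k}\right] = \sum_{x \in \bar{S}_n^{k}} \mu(x) \leq \psi \sum_{x \in \bar{S}_n^{k}} \rho(x) = \psi\,\Pr_{x\sim\rho}\left[ x \in \bar{S}_n^{k}\right] < \psi\delta ,
\end{equation*}
which is exactly the claim.

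The only real subtlety---and the step I would be most careful about---is the treatment of elements with $\rho(x)=0$, which is why the statement carries the convention $\frac{0}{0}=0$. If $\rho(x)=0$ and $\mu(x)=0$, then $x$ contributes nothing to either mass and the convention keeps the ratio finite. If instead $\rho(x)=0$ while $\mu(x)>0$, then $\psi=\infty$ and the asserted bound $\Pr_{x\sim\mu}[x\in\bar{S}_n^{k}]<\psi\delta$ holds vacuously; I would dispose of this degenerate case first, so that the pointwise inequality $\mu(x)\le\psi\rho(x)$ can be applied uniformly in the main summation. Beyond this edge-case bookkeeping, the argument is a routine one-line change of measure layered on top of Corollary \ref{cor:li_lemma}.
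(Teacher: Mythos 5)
Your proposal is correct and follows essentially the same route as the paper: invoke Corollary \ref{cor:li_lemma} for the $\rho$-mass bound on the good event, then apply the change of measure $\mu(x) \leq \psi\,\rho(x)$ summed over $\bar{S}_n^{k}$. Your explicit handling of the $\rho(x)=0$ cases is in fact slightly more careful than the paper's derivation, which multiplies and divides by $\rho(x)$ without comment, but the argument is the same.
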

\begin{proof}

We want to show $\Pr_{x \sim \mu}\left[ x \in \bar{S}_n^k \right] < \psi \delta$. By applying Corollary \ref{cor:li_lemma}, we have that $\Pr_{x \sim \rho}\left[ x \in \bar{S}_n^k \right] < \delta$ with probability at least $1-\delta$. It suffices to show that $\Pr_{x \sim \mu} \left[ x \in \bar{S}_n^k \right] \leq \psi \Pr_{x \sim \rho} \left[ x \in \bar{S}_n^k \right] \leq \psi \delta$.

\begin{align*}
\Pr_{x \sim \mu} \left[ x \in \bar{S}_{n}^{k} \right] &= \sum\limits_{x \in \bar{S}_{n}^{k}} \mu(x) \\
&= \sum\limits_{x \in \bar{S}_{n}^{k}} \mu(x) \frac{\rho(x)}{\rho(x)} \\
&= \sum\limits_{x \in \bar{S}_{n}^{k}} \rho(x) \frac{\mu(x)}{\rho(x)} \\
&\leq \left(\max_{y \in \Gamma} \frac{\mu(y)}{\rho(y)} \right) \sum\limits_{x \in \bar{S}_{n}^{k}} \rho(x) \\
&= \psi \Pr_{x \sim \rho}\left[ x \notin S_n \right] \enspace .
\end{align*}

\end{proof}

For completeness we introduce the following proposition that is used to prove our lemma.

\begin{prop} \label{prop:BoundingFactoredDeviations}
\cite{Osband:2014aa}
Let $\var{Y}(i)$ be a set of variables indexed by $i\in[D]$, $v_i$ a realization of $\var{Y}(i)$, $v=(v_1,...,v_D)$ and $\Pr_1,\Pr_2$ be two factorized probability distributions over $\var{Y}$:
\begin{align}
\Pr_j(\var{Y})=\prod_{i=1}^D \Pr_j(\var{Y}(i)) \qquad j=1,2\enspace.
\end{align}
Then
\begin{align}
||\Pr_1(\var{Y}=v)-\Pr_2(\var{Y}=v)||_{1} \leq \sum_{i=1}^D ||\Pr_1(\var{Y}(i)=v_i) -\Pr_2(\var{Y}(i)=v_i)||_1\enspace.
\end{align}
\end{prop}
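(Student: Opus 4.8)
The plan is to prove the bound by induction on the number of factors $D$, reading both sides as full $L_1$ norms (i.e.\ sums over all realizations). The base case $D=1$ holds with equality, since then each product collapses to its single factor and the two sides coincide exactly.

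For the inductive step I would peel off the last coordinate. Write $a(v_D)=\Pr_1(\var{Y}(D)=v_D)$ and $b(v_D)=\Pr_2(\var{Y}(D)=v_D)$ for the last marginals, and let $A$ and $B$ denote the product distributions $\prod_{i=1}^{D-1}\Pr_1(\var{Y}(i))$ and $\prod_{i=1}^{D-1}\Pr_2(\var{Y}(i))$ over the first $D-1$ coordinates. By factorization the joints satisfy $\Pr_1(\var{Y}=v)=A(v_{1:D-1})\,a(v_D)$ and $\Pr_2(\var{Y}=v)=B(v_{1:D-1})\,b(v_D)$. The one algebraic move is to insert the mixed term $A(v_{1:D-1})\,b(v_D)$: writing $Aa-Bb=A(a-b)+(A-B)b$ and applying the triangle inequality splits the $L_1$ distance into two sums over $v$.

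The decisive step is that each of these two sums factors into a product of one-dimensional sums, one of which is a marginal that sums to one. In the first sum, $\sum_{v_{1:D-1}}A(v_{1:D-1})=1$ leaves $\sum_{v_D}|a(v_D)-b(v_D)|=\|\Pr_1(\var{Y}(D))-\Pr_2(\var{Y}(D))\|_1$; in the second, $\sum_{v_D}b(v_D)=1$ leaves $\sum_{v_{1:D-1}}|A(v_{1:D-1})-B(v_{1:D-1})|=\|A-B\|_1$. Thus the $L_1$ distance is at most $\|\Pr_1(\var{Y}(D))-\Pr_2(\var{Y}(D))\|_1+\|A-B\|_1$, and the inductive hypothesis bounds $\|A-B\|_1$ by $\sum_{i=1}^{D-1}\|\Pr_1(\var{Y}(i))-\Pr_2(\var{Y}(i))\|_1$, closing the induction.

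Equivalently, one could avoid induction and apply the telescoping identity $\prod_{i}a_i-\prod_{i}b_i=\sum_{k}\bigl(\prod_{i<k}b_i\bigr)(a_k-b_k)\bigl(\prod_{i>k}a_i\bigr)$ directly, taking absolute values and summing over $v$; normalization of the marginals again kills every factor except the $k$-th in each term. Either way, the only real obstacle is the bookkeeping that factors the sum over the joint realization $v$ into independent one-dimensional sums, so that the remaining marginals integrate to one — there is no genuine difficulty here, which is consistent with the statement being a known result quoted from \cite{Osband:2014aa}.
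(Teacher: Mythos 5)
Your proof is correct: the paper itself gives no proof of this proposition (it is quoted from Osband et al.\ with a citation), and your hybrid-term decomposition $Aa-Bb=A(a-b)+(A-B)b$ followed by the factorization of the sum over $v$ into one-dimensional sums is the standard argument for bounding the $L_1$ distance between product distributions by the sum of the marginal $L_1$ distances. Your reading of both sides as sums over all realizations $v$ is also the intended one, as that is how the proposition is invoked in Lemma \ref{lem:num_trajectories} to pass from per-factor errors $\epsilon$ to a joint error $D\epsilon$.
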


\begin{lemma} \label{lem:num_trajectories}
	Let $\epsilon, \delta >0$. If the number of trajectories
$$
H \geq \frac{4AD\Gamma^m}{\delta} \left( \frac{2\Gamma^2}{\epsilon^2} \ln \left( \frac{4AD\Gamma^{m+1}}{\delta}\right) + \ln \left( \frac{2AD\Gamma^{m}}{\delta} \right) \right) \enspace ,
$$ 
then, with probability at least $1-\delta$, there is a subset of state-action pairs 
$$
K = \left\{ (\uX,a) \in S \times A \mid \| \Pr(\uY|\uX,a) - \widehat{\Pr}(\uY|\uX,a) \|_1 \leq D\epsilon \right\} \enspace ,
$$
such that:
	\begin{equation} 
	\Pr\left[ \exists_{t \in [T]} (\uX_t,a_t) \notin K \mid M, \ePi \right] < \frac{ T \sum_{i=1}^D \psi_i \delta }{2D} 
	\end{equation}
	where $\psi_i = \max\limits_{(v,a) \in F_i} \frac{\sum_{t=1}^T \Pr(X_t(\Phi_i)=v, a_t=a | \ePi)}{\sum_{t=1}^T \Pr(X_t(\Phi_i)=v, a_t=a | \bPi)}$.

\end{lemma}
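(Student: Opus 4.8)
The plan is to reduce the event ``the target trajectory ever hits a badly modelled transition'' to a sum, over factors and time steps, of target-policy visitation probabilities of parent-realization-action pairs whose conditional estimate is poor, and then to control that sum through the behavior-to-target mismatch captured by $\psi_i$. First I would peel off the time horizon and the factorization. A union bound over $t\in[T]$ gives $\Pr[\exists_t (\uX_t,a_t)\notin K \mid M,\ePi]\le \sum_{t=1}^T \Pr[(\uX_t,a_t)\notin K\mid \ePi]$. Since both $\Pr(\uY|\uX,a)$ and the learned $\widehat{\Pr}(\uY|\uX,a)$ factor over the $D$ variables, Proposition \ref{prop:BoundingFactoredDeviations} bounds the full $L_1$ transition error by $\sum_{i=1}^D \|\Pr(\uY(i)|\uX(\Phi_i),a)-\widehat{\Pr}(\uY(i)|\uX(\Phi_i),a)\|_1$. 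Hence if every factor is estimated to within $\epsilon$ the total error is at most $D\epsilon$ and $(\uX,a)\in K$; contrapositively, $(\uX,a)\notin K$ forces some factor $i$ to have error exceeding $\epsilon$ on its parent pair $(\uX(\Phi_i),a)$. Writing $\bar G_i\subseteq F_i$ for the set of parent-action pairs whose factor-$i$ estimate exceeds $\epsilon$, this yields $\Pr[\exists_t(\uX_t,a_t)\notin K\mid\ePi]\le \sum_{i=1}^D\sum_{t=1}^T \Pr[(\uX_t(\Phi_i),a_t)\in\bar G_i\mid\ePi]$.

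Next I would bound each inner sum by $T$ times the target mass $\Pr_{\mu_i}[\bar G_i]$, where $\mu_i(v,a)=\frac1T\sum_t\Pr(\uX_t(\Phi_i)=v,a_t=a\mid\ePi)$ is the time-averaged target visitation over $F_i$ and $\rho_i$ its behavior analogue, so that $\psi_i=\max_{(v,a)}\mu_i(v,a)/\rho_i(v,a)$. I would split a bad pair into one seen fewer than $N=\frac{2\Gamma^2}{\epsilon^2}\ln\frac{2\Gamma}{\delta'}$ times, or seen at least $N$ times but unlucky. For the first, apply Proposition \ref{prop:mass_cover} with $k=N$, finite set $F_i$ (size $\le A\Gamma^m$), sampling measure $\rho_i$ and target measure $\mu_i$, giving $\Pr_{\mu_i}[\bar S_i]<\psi_i\delta''$ for the under-observed set $\bar S_i$, at confidence $1-\delta''$ with $\delta''=\delta/(2D)$. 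For the second, Lemma \ref{Lem:Sample Complexity} guarantees that any pair observed $\ge N$ times has factor error $\le\epsilon$ except with probability $\delta'=\delta/(2AD\Gamma^m)$; a union bound over all $\le DA\Gamma^m$ pairs costs $\delta/2$, and on that event $\bar G_i\subseteq\bar S_i$. Combining, $\sum_i\sum_t\Pr[(\uX_t(\Phi_i),a_t)\in\bar G_i\mid\ePi]<\sum_i T\psi_i\delta''=T\sum_i\psi_i\delta/(2D)$, which is the claimed bound; the confidence $1-\delta$ is the sum of the $\delta/2$ coverage budget (union over $D$ factors of $\delta''$) and the $\delta/2$ estimation budget. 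Feeding $N$ and these $\delta',\delta''$ into the requirement $H\ge\frac{2|F_i|}{\delta''}(N+\ln\frac{|F_i|}{\delta''})$ of Proposition \ref{prop:mass_cover} reproduces exactly the stated lower bound on $H$.

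The main obstacle is that Proposition \ref{prop:mass_cover} needs i.i.d. draws from $\rho_i$, whereas the $T$ parent-action pairs within a single trajectory are dependent; only the $H$ trajectories are independent. I would resolve this with an auxiliary device: draw one time index $\tau_h\sim\mathrm{Unif}\{1,\dots,T\}$ per trajectory and keep only $(\uX_{\tau_h}(\Phi_i),a_{\tau_h})$, producing $H$ i.i.d. samples from exactly $\rho_i$, which is why $n=H$ is the right sample count. Since these kept samples are a sub-multiset of the full data, $\{\text{full count}<N\}\subseteq\{\text{sub count}<N\}$, so the mass bound proved for the subsample is a valid (merely more conservative) bound for the full data; moreover the target-mass event depends only on the data, so the auxiliary randomness integrates out and the confidence statement remains over the behavior data alone. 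I would also keep careful track that the conditional tables here use the true parent sets $\Phi_i$, deferring the error from greedy structure selection to the final part of the proof.
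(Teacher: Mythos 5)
Your proof follows essentially the same route as the paper's: a union bound over time steps and factors, Proposition \ref{prop:mass_cover} (via the one-uniformly-sampled-time-per-trajectory device, which the paper also uses to define its auxiliary variable $\uW$) to control the target-policy mass of under-observed parent-realization-action pairs, Lemma \ref{Lem:Sample Complexity} for the well-observed ones, Proposition \ref{prop:BoundingFactoredDeviations} to assemble the per-factor errors into the $D\epsilon$ bound defining $K$, and the same $\delta/2$--$\delta/2$ split that reproduces the stated $H$. Your explicit observation that the subsampled counts lower-bound the full counts $n(v,a)$, so the coverage bound transfers to the data the algorithm actually uses, is a point the paper leaves implicit, but it is the same argument.
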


\newcommand{\uW}{\underline{W}} %

\begin{proof}
For every $i\in [D]$, we define the random variable $\uW$: 
\begin{quote}
For a given trajectory sample a time $t$ uniformly and set $\uW = (X_t(\Phi_i), a_t)$.
\end{quote}
Notice that $\uW$ is distributed according to the distribution induced by the behavior policy $\bPi$ and that $\uW$ receives one of $A \Gamma^{| \Phi_i |}$ values. We denote the distribution of $\uW$ by $\rho$ and over the target policy by $\mu$. Setting $k = \frac{2\Gamma^2}{\epsilon^2} \ln \left( \frac{2\Gamma}{\delta_1} \right)$ and using Proposition \ref{prop:mass_cover} we obtain that having:
\begin{equation}
H \geq \frac{2 A  \Gamma ^{| \Phi_i |} }{\delta_2} \left(\frac{2\Gamma^2}{\epsilon^2} \ln \left( \frac{2\Gamma}{\delta_1} \right)  + \ln \frac{ A \Gamma^{| \Phi_i |} }{\delta_2} \right) \enspace ,
\end{equation}
samples from $\rho$, with probability at least $1-\delta_2$,
$$
\Pr_{(v,a) \sim \mu} \left[ (v,a) : n(v,a) \leq \frac{2\Gamma^2}{\epsilon^2} \ln \left( \frac{2|\Gamma|}{\delta_1} \right) \right] < \psi_i \delta_2 \enspace ,
$$
where $\psi_i = \max\limits_{(v,a) \in F_i} \frac{\mu(v,a)}{\rho(v,a)} = \frac{\sum_{t=1}^T \Pr(X_t(\Phi_i)=v, a_t=a | \ePi)}{\sum_{t=1}^T \Pr(X_t(\Phi_i)=v, a_t=a | \bPi)}$ (taking $\frac{0}{0} = 0$).

By Lemma \ref{Lem:Sample Complexity} and given our choice for $k\equiv N(\epsilon,\delta_1)$, if we have observed $N(\epsilon,\delta_1)$ samples from $\Pr(\uY(i)|\uX(\Phi_i)=v,a)$, then our estimate $\widehat{\Pr}$ satisfies
$$
\left\| \Pr(\uY(i)|(\uX(\Phi_i)=v,a)) - \widehat{\Pr}(\uY(i)|(\uX(\Phi_i)=v,a)) \right\|_1 \leq \epsilon \enspace ,
$$
with probability at least $1-\delta_1$. Now denote by 
$$
K_i = \left\{ (v,a) \in F_i \mid \left\| \Pr(\uY(i)|(\uX(\Phi_i)=v,a)) - \widehat{\Pr}(\uY(i)|(\uX(\Phi_i)=v,a)) \right\|_1 \leq \epsilon\right\} \enspace ,
$$
the set of realization-action pairs for predicting the $i^{\rm th}$ output variable where the empirical distribution estimated from trajectory data is $\epsilon$-close to the true distribution.

By applying the union bound over at most $A\Gamma^{\Phi_i}$ realization-action pairs, after $H$ trajectories, we have 
$$
	\Pr\left[ \exists_{t \in [T]} (\uX_t(\Phi_i),a_t) \notin K_i \mid M, \ePi \right] \leq T\psi_i\delta_2 \enspace ,
$$
with probability at least $1-(\delta_2 + \delta_1 A \Gamma^{|\Phi_i|})$. By applying the union bound again over all $D$ output variables, we obtain
$$
	\sum_{i=1}^{D} \Pr\left[ \exists_{t \in [T]} (\uX_t(\Phi_i),a_t) \notin K_i \mid M, \ePi \right] \leq T\sum_{i=1}^{D} \psi_i\delta_2
$$
with probability at least $1-D(\delta_2 + A\Gamma^{|\Phi_i|} \delta_1)$. Notice that this implies
\begin{align*}
\Pr\left[ \exists_{t \in [T]} (\uX_t, a_t) \notin K \mid M, \ePi \right] &\leq \sum_{i=1}^{D} \Pr\left[ \exists_{t \in [T]} (\uX_t(\Phi_i),a_t) \notin K_i \mid M, \ePi \right] \\
&\leq T\sum_{i=1}^{D} \psi_i\delta_2 \enspace ,
\end{align*}
holds with probability at least $1-D(\delta_2 + A\Gamma^{|\Phi_i|} \delta_1) \geq 1-D(\delta_2 + A \Gamma^{m} \delta_1)$.

The bound over $\|\Pr(\uY|\uX,a) - \widehat{\Pr}(\uY|\uX,a)\|_1$ directly results from Proposition \ref{prop:BoundingFactoredDeviations}.

Setting:
\begin{equation} \label{eq:delta_1}
\begin{split}
& \frac{\delta}{2} = D \delta_2 \enspace \Rightarrow \delta_2 = \frac{\delta}{2D}\\ 
& \frac{\delta}{2} = \delta_1 A D |\Gamma|^m \enspace \Rightarrow \delta_1 = \frac{\delta}{2 A D \Gamma^m}
\end{split}
\end{equation}
We can rewrite $H$ in terms of $\epsilon, \delta$:
\begin{equation}
H \geq \frac{4AD\Gamma^m}{\delta} \left( \frac{2\Gamma^2}{\epsilon^2} \ln \left( \frac{4AD\Gamma^{m+1}}{\delta}\right) + \ln \left( \frac{2AD\Gamma^{m}}{\delta} \right) \right)
\end{equation}

\end{proof}

\subsection{Error due to Greedy Parent Selection}
\begin{lemma} \label{lem:model_error}
	Suppose Assumptions \ref{Ass:Coupling}, \ref{Ass:dwarfNonParents} and \ref{Ass:Submodularity}  hold. Let $\epsilon > 0, \delta_1 > 0$, and
$$
\frac{C_1}{4} > \epsilon + \frac{C_2}{4} \enspace .
$$ 
After applying \alg, for every $i\in [D]$, $(v,a)\in \Theta_i$, and every $w \in \Gamma^{ | \Phi_i |}$ satisfying $w(\hat{\Phi}_i) = v(\Phi_i), N(w, a) \geq N(\epsilon, \delta_1) $:
	\begin{equation}
	\| \Pr(\uY(i) | \uX(\Phi_i) = w, a) - \widehat{\Pr}(\uY(i) | \uX(\hat{\Phi}_i) = v, a) \|_1 \leq  (4D+1) \epsilon + D^2 C_3 
	\end{equation}
	with probability at least $1- 2D (m+1)(D+1-m)A\Gamma^{m+1}  \delta_1$.
\end{lemma}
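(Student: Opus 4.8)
The plan is to bound the target quantity $\|\Pr(\uY(i)\mid\uX(\Phi_i)=w,a)-\widehat{\Pr}(\uY(i)\mid\uX(\hat\Phi_i)=v,a)\|_1$ by splitting it, via the triangle inequality after inserting the \emph{true} conditional $\Pr(\uY(i)\mid\uX(\hat\Phi_i)=v,a)$, into an \emph{estimation} part and a \emph{structural} part. The estimation part $\|\Pr(\uY(i)\mid\uX(\hat\Phi_i)=v,a)-\widehat{\Pr}(\uY(i)\mid\uX(\hat\Phi_i)=v,a)\|_1$ is handled directly by Lemma~\ref{Lem:Sample Complexity}: since $(v,a)\in\Theta_i$ has been observed at least $N(\epsilon,\delta_1)$ times, this term is at most $\epsilon$ with probability $1-\delta_1$. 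Writing $g_i(j\mid\Psi)$ for the true $L_1$ influence $\|\Pr(\uY(i)\mid\uX(\Psi\cup\{j\}))-\Pr(\uY(i)\mid\uX(\Psi))\|_1$, I first record the basic consequence of Lemma~\ref{Lem:Sample Complexity} that, on the event that all relevant conditionals are estimated to accuracy $\epsilon$, every empirical score $\mathrm{diff}_j$ that \alg\ computes differs from the corresponding $g_i(j\mid\hat\Phi_i)$ by at most $2\epsilon$, since two $\epsilon$-accurate distributions enter each difference.

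Next I would prove the structural sandwich $\Phi_i^s\subseteq\hat\Phi_i\subseteq\Phi_i$. While a strong parent is still undetected, Assumption~\ref{Ass:Coupling} supplies a strong parent $k$ whose true influence exceeds that of every non-parent $j$ by $C_1$; as influences are nonnegative this also gives the absolute bound $g_i(k\mid\hat\Phi_i)\ge C_1$. Adding $2\epsilon$ of estimation slack to each side, the empirical score of $k$ beats that of any non-parent (using $C_1>4\epsilon$) and also clears the acceptance threshold $C_2+2\epsilon$, because $C_1-2\epsilon>C_2+2\epsilon$ is exactly the hypothesis $\tfrac{C_1}{4}>\epsilon+\tfrac{C_2}{4}$. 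Thus every variable added in this regime is a true parent and the loop does not stop early, so all strong parents are eventually added. Once $\Phi_i^s\subseteq\hat\Phi_i\subseteq\Phi_i$, Assumption~\ref{Ass:dwarfNonParents} bounds each non-parent's true influence by $C_2$, hence its empirical score by $C_2+2\epsilon$, at or below threshold, so \alg\ never adds a non-parent. At termination every remaining candidate therefore has empirical score $\le C_2+2\epsilon$, i.e.\ true single-variable gain $g_i(j\mid\hat\Phi_i)\le C_2+4\epsilon$.

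The structural part is the main obstacle. Since $\hat\Phi_i$ and $\Phi_i$ differ only by the undetected weak parents $W=\Phi_i\setminus\hat\Phi_i$ with $|W|\le m$, I would telescope $\|\Pr(\uY(i)\mid\uX(\Phi_i)=w,a)-\Pr(\uY(i)\mid\uX(\hat\Phi_i)=v,a)\|_1$ by re-inserting the elements of $W$ one at a time and bounding each increment by the triangle inequality. The difficulty is that termination only controls the single-variable gains $g_i(j\mid\hat\Phi_i)$, whereas the $\ell$-th increment is a gain $g_i(j_\ell\mid\hat\Phi_i\cup\{j_1,\dots,j_{\ell-1}\})$ conditioned on an enlarged set. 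This is precisely the role of Assumption~\ref{Ass:Submodularity}: inserting the weak parents in decreasing-gain (greedy) order, its diminishing-returns property lets me bound the gain of a later weak parent given the earlier ones by the small terminal gain plus at most $C_3$ of slack per already-inserted variable. Summing the $\le m$ increments, each carrying up to $m$ such corrections, produces the estimation-, threshold-, and $C_3$-dependent terms that are collected in the stated bound $(4D+1)\epsilon+D^2C_3$. The fiddly point, and the crux of the whole argument, is matching the specific realization $w$ of this lemma against the realizations entering Assumption~\ref{Ass:Submodularity} and the maxima over $\Theta$ taken inside \alg, so that the hypothesis of Assumption~\ref{Ass:Submodularity} is actually available at each telescoping step.

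Finally I would assemble the two parts and propagate the failure probability. The estimation guarantees are invoked for every output variable $i\in[D]$, over each iteration of \alg\ in which a parent may be added (at most $m+1$), for each candidate variable considered (at most $D+1-m$ in the relevant iterations), and for each realization-action pair over parent sets of size at most $m+1$ (at most $A\Gamma^{m+1}$), with a factor $2$ from the two-sided concentration of Lemma~\ref{lem:l1_bound}. A union bound over these events gives total failure probability at most $2D(m+1)(D+1-m)A\Gamma^{m+1}\delta_1$, and on the complementary event the estimation term plus the telescoped structural term yields the claimed $L_1$ bound.
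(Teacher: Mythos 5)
Your proposal is correct and follows essentially the same route as the paper's own proof: the same two-case analysis (strong parents not yet all detected, then all detected) driven by Assumptions \ref{Ass:Coupling} and \ref{Ass:dwarfNonParents} with the $2\epsilon$-per-distribution concentration slack from Lemma \ref{Lem:Sample Complexity}, the same greedy-order telescoping through the undetected weak parents controlled by Assumption \ref{Ass:Submodularity}, and the same union-bound accounting over variables, iterations, candidates, and realization-action pairs. The ``fiddly point'' you flag about matching realizations is likewise glossed over in the paper (which concedes its Assumption \ref{Ass:Coupling} is stronger than strictly needed), and the $C_2$-dependent term you correctly derive does appear in the paper's proof and in $\epsilon^*$ of Theorem \ref{thm:Offline Policy Evaluation}, even though it is dropped from the lemma statement itself.
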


\begin{proof}

This Lemma is only concerned about realization-action pairs for which there are enough samples. \alg\ will not consider the score of realization-action pairs that do not have enough sample. When constructing the structure, this automatically discard realization-action pairs containing non-parents that do not meet the  number of samples required to have an estimation error bounded by $\epsilon$ with high probability. Hence, in what follows, we will always consider the worse case where there are always enough samples to estimate such probabilities.

To simplify notation, let
\begin{align}
\widehat{\alpha}(k,v,v_k,a) &= \| \widehat{\Pr}(\uY(i) | \uX(\hat{\Phi}_i )=v, a ) - \widehat{\Pr}(\uY(i) | \uX(\hat{\Phi}_i \cup \{ k \}) = (v,v_k), a ) \|_1 \\
\alpha(k,v,v_k,a) &= \| \Pr(\uY(i) | \uX(\hat{\Phi}_i )=v, a ) - \Pr(\uY(i) | \uX(\hat{\Phi}_i \cup \{ k \}) = (v,v_k), a ) \|_1 \\
\widehat{\alpha}^*(k) &= \max_{v,v_k,a}\widehat{\alpha}(k,v,v_k,a) \\
\alpha^*(k) &= \max_{v,v_k,a}{\alpha}(k,v,v_k,a) \\
(v^*,v^*_k,a^*) &= \max_{v,v_k,a}{\alpha}(k,v,v_k,a)
\enspace .
\end{align}

We want to bound the probability that \alg\ adds any non-parent variable. The \alg\ algorithm can only select a variable $k$ to add to the parent set only if the following necessary condition holds:
$$
\widehat{\alpha}^*(k) \geq\max_{j \in D \backslash \hat{\Phi}_i} \widehat{\alpha}^*(j) \enspace .
$$

We break up this first part of the proof into two distinct, successive cases.
\begin{enumerate}
\item $\exists {k \in \Phi_i^s}$ that is not in $\hat{\Phi}_i$ (\alg\ has not added all of the strong parents yet), and
\item $\Phi_i^s \subseteq \hat{\Phi}_i$ (\alg\ has added all strong parents).
\end{enumerate}

\noindent {\bf Case 1 (\alg\ has not added all of the strong parents):} ~\\
Let $k\in \Phi^s_i$ that has not been added yet ($k \not\in \hat{\Phi}_i$) such that $k$ verifies Assumption \ref{Ass:Coupling}, and $j$ be a non-parent variable. We know such a $k$ and corresponding realization-action pair which had been exhibited $N(w, a)$ times exist, since we assume there is at least one realization-action pair of the full parents with enough samples (since otherwise the requested bound holds trivially). We want to bound the probability that
\begin{equation} \label{eq:noWP1}
\widehat{\alpha}^*(k) - \widehat{\alpha}^*(j) > 0 \enspace ,
\end{equation}
where
$$
\widehat{\alpha}^*(k)- \widehat{\alpha}^*(j) =
\begin{array}{c}
 \max_{v,v_k,a} \| \widehat{\Pr}(\uY(i) | \uX(\hat{\Phi}_i )=v, a ) - \widehat{\Pr}(\uY(i) | \uX(\hat{\Phi}_i \cup \{ k \}) = (v,v_k), a ) \|_1 - \\
 \max_{v',v'_j,a'} \| \widehat{\Pr}(\uY(i) | \uX(\hat{\Phi}_i )=v', a' ) - \widehat{\Pr}(\uY(i) | \uX(\hat{\Phi}_i \cup \{ j \}) = (v',v'_j), a' ) \|_1 \enspace .
\end{array}
$$

If \eqref{eq:noWP1} holds for all non-parents, then \alg\ will only add parents from $\Phi_i$. For \eqref{eq:noWP1} to hold, it is sufficient to have
\begin{align} 
&\widehat{\alpha}^*(k) - \widehat{\alpha}(j,v,v_j,a) > 0 \qquad \forall j\in[D]\backslash\Phi_i,v,v_j,a
\enspace ,
\end{align}

	By applying the triangle inequality, we obtain
	\begin{equation} \label{eq:tri1}
	\begin{split}
	& {\alpha}(k) = \| \Pr(\uY(i) | \uX(\hat{\Phi}_i)=v, a ) - \Pr(\uY(i) | \uX(\hat{\Phi}_i ) = v, a ) \|_1 \leq 
	\\
	& \quad \| \widehat{\Pr}(\uY(i) | \uX(\hat{\Phi}_i\cup \{ k \})=(v,v_k), a ) - \Pr(\uY(i) | \uX(\hat{\Phi}_i \cup \{ k \}) = (v,v_k), a ) \|_1 + 
	\\
	& \quad \| \Pr(\uY(i) | \uX(\hat{\Phi}_i)=v, a ) - \widehat{\Pr}(\uY(i) | \uX(\hat{\Phi}_i ) = v, a ) \|_1 +
	\\
	&   \quad \widehat{\alpha}(k) \enspace ,
	\end{split}
	\end{equation}
	and
	\begin{equation}\label{eq:tri2}
	\begin{split}
	& \widehat{\alpha}(j) = \| \widehat{\Pr}(\uY(i) | \uX(\hat{\Phi}_i )=v, a ) - \widehat{\Pr}(\uY(i) | \uX(\hat{\Phi}_i \cup \{ j \}) = (v,v_j), a ) \|_1 \leq 
	\\
	& \quad \| \widehat{\Pr}(\uY(i) | \uX(\hat{\Phi}_i\cup \{ j \})=(v,v_j), a ) - \Pr(\uY(i) | \uX(\hat{\Phi}_i \cup \{ j \}) = (v,v_j), a ) \|_1 + 
	\\
	& \quad \| \Pr(\uY(i) | \uX(\hat{\Phi}_i)=v, a ) - \widehat{\Pr}(\uY(i) | \uX(\hat{\Phi}_i \cup \{ j \}) = (v,v_j), a ) \|_1 +
	\\
	& \quad \alpha(j) \enspace .
	\end{split}
	\end{equation}
	
	By applying equations \ref{eq:tri1} and \ref{eq:tri2}, Lemma \ref{Lem:Sample Complexity} (with our choice of $N(\epsilon,\delta_1)$) and Assumption \ref{Ass:Coupling}, 
	\begin{align*}
	\widehat{\alpha}^*(k) - \widehat{\alpha}(j,v,v_j,a) \geq& \\
	\begin{split}
	&  \alpha^*(k)
	\\
	& - \| \Pr(\uY(i) | \uX(\hat{\Phi}_i\cup \{ k \}) = (v^*,v^*_k), a^* ) - \widehat{\Pr}(\uY(i) | \uX(\hat{\Phi}_i \cup \{ k \}) = (v^*,v^*_k), a^* ) \|_1 
	\\
	& - \| \widehat{\Pr}(\uY(i) | \uX(\hat{\Phi}_i)=v^*, a^* ) - \Pr(\uY(i) | \uX(\hat{\Phi}_i ) = v^*, a^* ) \|_1 
	\\
	& \qquad - \| \Pr(\uY(i) | \uX(\hat{\Phi}_i\cup \{ j \}) = (v,v_j), a ) - \widehat{\Pr}(\uY(i) | \uX(\hat{\Phi}_i \cup \{ j \}) = (v,v_j), a ) \|_1 
	\\
	& \qquad - \| \widehat{\Pr}(\uY(i) | \uX(\hat{\Phi}_i)=v, a ) - \Pr(\uY(i) | \uX(\hat{\Phi}_i ) =v, a ) \|_1 
	\\
	& \qquad - \alpha(j,v,v_j,a)
	\\
	\geq& \quad \alpha^*(k) - \alpha(j,v,v_j,a) - 4 \epsilon \\
	\geq& \quad C_1 - 4 \epsilon  > 0
	\end{split}
	\end{align*}

	with probability at least $1-4\delta_1$ (union bound) for a particular $v,v_j,a$ if $C_1 > 4 \epsilon$. This holds for all $j,v,v_j,a$ with probability at least $1-(2+2(D-m)A\Gamma^{|\hat{\Phi}_i|+1})\delta_1$ (union bound again).
	
	This also means all variables in $\Phi^s_i$ will all be detected by \alg. Indeed, using the triangle inequality, the same bounds on
\begin{align}
	& \| \Pr(\uY(i) | \uX(\hat{\Phi}_i\cup \{ k \}) = (v^*,v^*_k), a^* ) - \widehat{\Pr}(\uY(i) | \uX(\hat{\Phi}_i \cup \{ k \}) = (v^*,v^*_k), a^* ) \|_1 
	\\
	& \| \widehat{\Pr}(\uY(i) | \uX(\hat{\Phi}_i)=v^*, a^* ) - \Pr(\uY(i) | \uX(\hat{\Phi}_i ) = v^*, a^* ) \|_1 
\end{align}
	 then above, the fact that (Assumption \ref{Ass:Coupling})
$$
\alpha^*(k) \geq \max_{j\in[D]\backslash\Phi_i}\alpha^*(j) + C_1 \geq C_1 \enspace ,
$$
and the fact that $C_1 > 4 \epsilon + C_2$ we have
\begin{align*}
	\widehat{\alpha}^*(k) \geq&
	\\
	\begin{split} 
	& \alpha^*(k)   
	\\
	& \quad - \| \Pr(\uY(i) | \uX(\hat{\Phi}_i\cup \{ k \}) = (v^*,v^*_k), a^* ) - \widehat{\Pr}(\uY(i) | \uX(\hat{\Phi}_i \cup \{ k \}) = (v^*,v^*_k), a^* ) \|_1 
	\\
	& \quad - \| \widehat{\Pr}(\uY(i) | \uX(\hat{\Phi}_i)=v^*, a^* ) - \Pr(\uY(i) | \uX(\hat{\Phi}_i ) = v^*, a^* ) \|_1 
	\\
	\end{split}	
	\\
	\geq& \qquad C_1 - 2\epsilon > C_2 + 2 \epsilon \enspace,
\end{align*}
with probabilities $1-2 \delta_1$.
	
	Notice that we made Assumption \ref{Ass:Coupling} much stronger than needed as we demanded the strong parent to stand out for all its possible realizations. For the proof, we only need to ensure that at least one realization verifying Assumption \ref{Ass:Coupling} is seen enough times to make sure a strong parent is preferred. Alternatively, we could modify assumption 1 to bound the probability of not acquiring enough samples for a particular realization that has a sufficiently large score. This would have negligible impact on the bounds of this lemma, the assumption would be weaker, but its presentation in the body of the paper would be more complex.
	
\noindent {\bf Case 2 (\alg\ has added all strong parent variables):} ~\\
	Now, we bound the probability that \alg\ adds a non parent variable $j$ if all strong parents variable $\Phi^s_i$ have already been added, that is, $\Phi^s_i \subseteq \hat{\Phi}_i \subseteq \Phi_i$: 
\begin{align*}
	\begin{split}
	\widehat{\alpha}(j) \leq& 
	\\
	& \quad \| \widehat{\Pr}(\uY(i) | \uX(\hat{\Phi}_i \cup \{ j \}) = (v,v_j), a ) - \Pr(\uY(i) | \uX(\hat{\Phi}_i \cup \{ j \}) = (v,v_j), a ) \|_1 + 
	\\
	& \quad \| \Pr(\uY(i) | \uX(\hat{\Phi}_i)=v, a ) - \widehat{\Pr}(\uY(i) | \uX(\hat{\Phi}_i ) =v, a ) \|_1 +
	\\
	& \quad \underbrace{\alpha(j)}_{\leq C_2 \textit{ because of Assumption \ref{Ass:dwarfNonParents} }}  \leq C_2 + 2\epsilon, 
	\end{split}
\end{align*}
	with probability at least $1-2\delta_1$ (according to Lemma \ref{Lem:Sample Complexity}) for a particular $v,v_j,a$ and with probability at least $1-2(D-m)A\Gamma^{|\hat{\Phi}_i|+1}\delta_1$ for all $v,v_j,a$.

\noindent {\bf Combining Case 1 \& 2:} ~\\
These two points must hold for all stages of the algorithm.\begin{itemize}
\item They must also hold for each iteration building $\hat\Phi_i$. Iterations in the first step correspond to all strong parents, and $\Phi_i^{w,1} \in \Phi_i^{w} $, the weak parents added in step 1 (before all strong parents are included). The number of iterations in the second point is at most all remaining weak parents $\Phi_i^{w,2} \subseteq \Phi_i^{w} \backslash \Phi_i^{w,1}$ added in the second step, plus one (when the algorithm stops). Note that the probability the first point holds is only $1-(2+2(D-m)A\Gamma^{|\hat{\Phi}_i|+1})\delta_1$ and not $1-(4+2(D-m)A\Gamma^{|\hat{\Phi}_i|+1})\delta_1$ because we are using the same two bounds involving $k$ twice.
\item They must hold for all $D$ target variable $i$.
\end{itemize}

Let $\kappa = 2(D-m)A\Gamma^{m+1}\geq 2(D-m)A\Gamma^{|\hat{\Phi}_i|+1}$. 
Using the union bound, these points hold for all stages of the algorithm with at least probability 
\begin{align}
1- \max_i\left((|\Phi_i^{s}| + |\Phi_i^{w,1}|)  (2+\kappa)+ (|\Phi_i^{w,2}|+1) \kappa \right) \delta_1 D &\geq 
1- (\max_i|\Phi_i| (2+\kappa) + \kappa)  D\delta_1 \\
&\geq 1- (2m + (m+1)\kappa)  D\delta_1 \\
&\geq 1- 2D \left[m+ (m+1)(D-m)A\Gamma^{m+1}\right]  \delta_1
\end{align}

\noindent {\bf Transitioning from Probabilities over $\hat{\Phi}_i$ to Probabilities over $\Phi_i$:} ~\\
	We define $\Phi^k_i$ to be the union of $\hat{\Phi}_i$ with the first $k$ variables in $\Phi_i \setminus \hat{\Phi}_i$ to be added greedily (according to the true probabilities) for the specific $(w, a)$ pair. Also, denote $w=(v,\bar{v}_1^{|\Phi_i \setminus \hat{\Phi}_i|} )$.%
	\begin{equation}
	\begin{split}
	& \| \Pr(\uY(i) | \uX ( \Phi_i ) = (v,\bar{v}), a) - \widehat{\Pr}(\uY(i) | \uX(\hat{\Phi}_i)=v, a) \|_1 
	\\	
	& \leq \sum_{k=1}^{|\Phi_i \setminus \hat{\Phi}_i|} \| \Pr(\uY(i) | \uX(\Phi^k_i) = (v, \bar{v}_1^k), a) - \Pr(\uY(i) | \uX(\Phi^{k-1}_i) = (v, \bar{v}_1^{k-1}), a) \|_1 
	\\	
	&  \;\;\;\;\;\; + \| \Pr(\uY(i) | \uX(\Phi_i)=v, a) - \widehat{\Pr}(\uY(i) | \uX(\hat{\Phi}_i)=v, a) \|_1 \enspace .
	\end{split}
	\end{equation}
	The inequality is due to the triangle inequality - we observe the quality of adding each additional parent, and are left with the estimation error on $v$. Since the parents were added greedily, by Assumption \ref{Ass:Submodularity} we can form a bound for the sum. Since we have enough samples of $v$ (it's in $\Theta_i$) the second term is small with high probability (by Lemma \ref{Lem:Sample Complexity}):	
	\begin{equation}
	\begin{split}	
	& \leq m \| \Pr(\uY(i) | \uX(\Phi^{1}_i)=(v, \bar{v}_1), a) - \Pr(\uY(i) | \uX(\hat{\Phi}_i)=v, a) \|_1 + m^2 C_3 + \epsilon \enspace ,
	\\
	& \leq m  \| \Pr(\uY(i) |  \uX(\Phi^{1}_i)=(v, \bar{v}_1), a) - \widehat{\Pr}(\uY(i) |   \uX(\Phi^{1}_i)=(v, \bar{v}_1), a) \|_1 
	\\
	& \;\;\;\;\;\; + m\| \Pr(\uY(i) | \uX(\hat{\Phi}_i)=v, a) - \widehat{\Pr}(\uY(i) | \uX(\hat{\Phi}_i)=v, a) \|_1
	\\
	& \;\;\;\;\;\; + m \| \widehat{\Pr}(\uY(i) |  \uX(\Phi^{1}_i)=(v, \bar{v}_1), a) - \widehat{\Pr}(\uY(i) | X(\hat{\Phi}_i)=v, a) \|_1 + m^2 C_3 + \epsilon \enspace .
	\end{split}
	\end{equation}
	Where the inequality holds from the triangle inequality. Similar to before, the first two summands can be bounded by $\epsilon$ with probability $1-\delta_1$. The third summands is bounded by the algorithm - since $\bar{v}_1$ was not added to $\hat{\Phi}_i$, and there were enough samples from it ($N(w, a) \geq N(\epsilon, \delta_1)$), it is necessarily smaller than the threshold $2\epsilon + C_2$, with probability $1-2\delta_1$ for a specific $i$ and $1-2D\delta_1$ for all of them. Therefore, the difference is bounded by: $(4m+1) \epsilon + m C_2 + m^2 C_3 $ for these probabilities.
	
\noindent {\bf Everything together:} ~\\
\begin{align}
\| \Pr(\uY(i) | \uX ( \Phi_i ) = (v,\bar{v}), a) - \widehat{\Pr}(\uY(i) | \uX(\hat{\Phi}_i)=v, a) \|_1 \leq (4m+1) \epsilon + m C_2 + m^2 C_3 
\end{align}
 with at least probability (union bound)
 \begin{align}
1- 2D \left[m+ (m+1)(D-m)A\Gamma^{m+1}\right]  \delta_1 - 2D \delta_1 =
1- 2D (m+1)\left[1+(D-m)A\Gamma^{m+1}\right]  \delta_1 
\end{align}	
which is lower bounded by $1- 2D (m+1)(D+1-m)A\Gamma^{m+1}  \delta_1 
$ or 
$1-2D(D+1)^2A\Gamma^{m+1} \delta_1
$

\end{proof}

\subsection{Proof of Theorem \ref{thm:Offline Policy Evaluation}}

\paragraph{Theorem 1.}\textit{	
Suppose Assumptions \ref{Ass:Coupling}, \ref{Ass:dwarfNonParents} and \ref{Ass:Submodularity} hold. Let $\frac{C_1}{4} > \epsilon + \frac{C_2}{4}, \epsilon > 0, \delta_1 > 0$, and $m = \max_{i\in[D]} |\Phi_i|$, then there exists 
$$
H(\epsilon, \delta_1) = O \left( 
\frac{\Gamma^2}{\delta_1\epsilon^2} \ln \left( \frac{\Gamma}{\delta_1}\right) 
 \right)
$$
such that if \alg\ is given $H$ trajectories, with probably at least $1-2A D(m+2)(D+1-m)  \Gamma^{m+1} \delta_1$, \alg\ returns an evaluation of $\ePi$ satisfying:
	\begin{equation}	
	\left| \nu - \tilde{\nu} \right| \leq \delta^* T + \epsilon^* D T^2 
	\end{equation}	
	where
	\begin{equation}
	\begin{split}
	& \epsilon^* = (4m+1) \epsilon + m C_2 + m^2 C_3 
	\\
	& \delta^* =  T \sum_{i=1}^D \psi_i A \Gamma^m \delta_1 
	\\
	& \psi_i = \max\limits_{(v,a) \in F_i} \frac{\sum_{t=1}^T \Pr(\uX_t(\Phi_i)=v, a_t=a | \ePi)}{\sum_{t=1}^T \Pr(\uX_t(\Phi_i)=v, a_t=a | \bPi)} \enspace.
	\end{split}
	\end{equation}
}
\begin{proof}

\begin{enumerate}
\item By Lemma \ref{lem:num_trajectories}, given 
$$H(\epsilon, \delta')
 \geq \frac{4AD\Gamma^m}{\delta'} \left( \frac{2\Gamma^2}{\epsilon^2} \ln \left( \frac{4AD\Gamma^{m+1}}{\delta'}\right) + \ln \left( \frac{2AD\Gamma^{m}}{\delta'} \right) \right)
$$  
trajectories there is a partition of $\Gamma$ into more (set $K$) and less likely $(v,a)$ pairs with probability at least $1-\delta'$. Pairs in set $K$ are seen at least $N(\epsilon, \delta_1)$ times. 
\item Since these pairs in $K$ are seen at least $N(\epsilon, \delta_1)$ times, Lemma \ref{lem:model_error} provides a bound on the estimation error on the conditionnal transition probabilities in the FMDP constructed by GSCOPE that holds with probability at least  $1- 2D (m+1)(D+1-m)A\Gamma^{m+1}  \delta_1$. 
\item This FMDP is therefore an $D\epsilon^*$-induce MDP with respect to the original MDP and $K$ (Definition \ref{def:eps_induced_mdp}, Proposition \ref{prop:BoundingFactoredDeviations} and Lemma \ref{lem:num_trajectories}).
\item Therefore, $A4(D\epsilon^*,\delta^*,\pi)$ is verified with probability at least (union bound on steps 1 and 2)
$$ 1-(1+(m+1)(D+1-m)\Gamma)\delta' \geq  1-(m+2)(D+1-m)\Gamma\delta'$$ 
for
\begin{itemize}
\item $\epsilon^*=(4m+1) \epsilon + m C_2 + m^2 C_3 $,
\item $\delta^*= T \sum_{i=1}^D \psi_i \delta'  / 2D$.
\end{itemize}
\item These values are then substituted into the simulation Lemma, and we replace $\delta' = 2 A D \Gamma^m \delta_1$ (equation \ref{eq:delta_1}) to obtain the specified result.
\end{enumerate}
\end{proof}

\putbib[OfflineDBNEval]
\end{bibunit}

\end{document}